\newtheorem{thm}{Theorem}
\newtheorem{cor}{Corollary}
\newtheorem{prop}{Proposition}
\newcommand{\RR}{\mathbb{R}}
\newcommand{\x}{\mathbf{x}}
\newcommand{\bx}{\bar{\mathbf{x}}}
\newcommand{\X}{\mathbf{X}}
\newcommand{\bX}{\bar{\mathbf{X}}}
\newcommand{\y}{\mathbf{y}}
\newcommand{\p}{\mathbf{p}}
\newcommand{\W}{\mathbf{W}}
\newcommand{\II}{\mathbf{1}}
\newcommand{\ee}{\mathbf{e}}
\newcommand{\f}{\mathbf{f}}
\newcommand{\tDL}{\widetilde{\nabla \mathcal{L}}}
\newcommand{\A}{\mathbf{A}}
\newcommand{\B}{\boldsymbol{B}}
\icmltitlerunning{Understanding Synthetic Gradients and DNIs}
\newcommand{\SG}{SG}
\newcommand{\DNI}{DNI}
\def\eg{\emph{e.g.}}
\def\ie{\emph{i.e.}}
\begin{document} 

\twocolumn[
\icmltitle{Understanding Synthetic Gradients and Decoupled Neural Interfaces}



\icmlsetsymbol{equal}{*}

\begin{icmlauthorlist}
\icmlauthor{Wojciech Marian Czarnecki}{dm}
\icmlauthor{Grzegorz \'{S}wirszcz}{dm}
\icmlauthor{Max Jaderberg}{dm}
\icmlauthor{Simon Osindero}{dm}
\icmlauthor{Oriol Vinyals}{dm}
\icmlauthor{Koray Kavukcuoglu}{dm}
\end{icmlauthorlist}

\icmlaffiliation{dm}{DeepMind, London, United Kingdom}

\icmlcorrespondingauthor{WM Czarnecki}{lejlot@google.com}

\icmlkeywords{boring formatting information, machine learning, ICML}

\vskip 0.3in
]



\printAffiliationsAndNotice{}  

\begin{abstract} 


When training neural networks, the use of Synthetic Gradients (\SG) allows layers or modules to be trained without update locking -- without waiting for a true error gradient to be backpropagated -- resulting in Decoupled Neural Interfaces (DNIs).
This unlocked ability of being able to update parts of a neural network asynchronously and with only local information was demonstrated to work empirically in \citet{DNI}. However, there has been very little demonstration of what changes 
\DNI{}s and \SG{}s impose from a functional, representational, and learning dynamics point of view. In this paper, we study DNIs through the use of synthetic gradients on feed-forward networks to better understand their behaviour and elucidate their effect on optimisation. 
We show that the incorporation of \SG{}s does not affect the representational strength of the learning system for a neural network, and prove the convergence of the learning system for linear and deep linear models. On practical problems we investigate the mechanism by which synthetic gradient estimators approximate the true loss, and, surprisingly, how that leads to drastically different layer-wise representations. Finally, we also expose the relationship of using synthetic gradients to other error approximation techniques and find a unifying language for discussion and comparison. 

\end{abstract} 

\vspace{-.3in}
\section{Introduction}

Neural networks can be represented as a graph of computational modules, and training these networks amounts to optimising the weights associated with the modules of this graph to minimise a loss. At present, training is usually performed with first-order gradient descent style algorithms, where the weights are adjusted along the direction of the negative gradient of the loss. In order to compute the gradient of the loss with respect to the weights of a module, one performs backpropagation \cite{williams1986learning} -- sequentially applying the chain rule to compute the exact gradient of the loss with respect to a module. However, this scheme has many potential drawbacks, as well as lacking biological plausibility \cite{marblestone2016toward,bengio2015towards}. In particular, backpropagation results in locking -- the weights of a network module can only be updated after a full forwards propagation of the data through the network, followed by loss evaluation, then finally after waiting for the backpropagation of error gradients. This locking constrains us to updating neural network modules in a sequential, synchronous manner.

\begin{figure}
\centering
\includegraphics[width=0.5\textwidth]{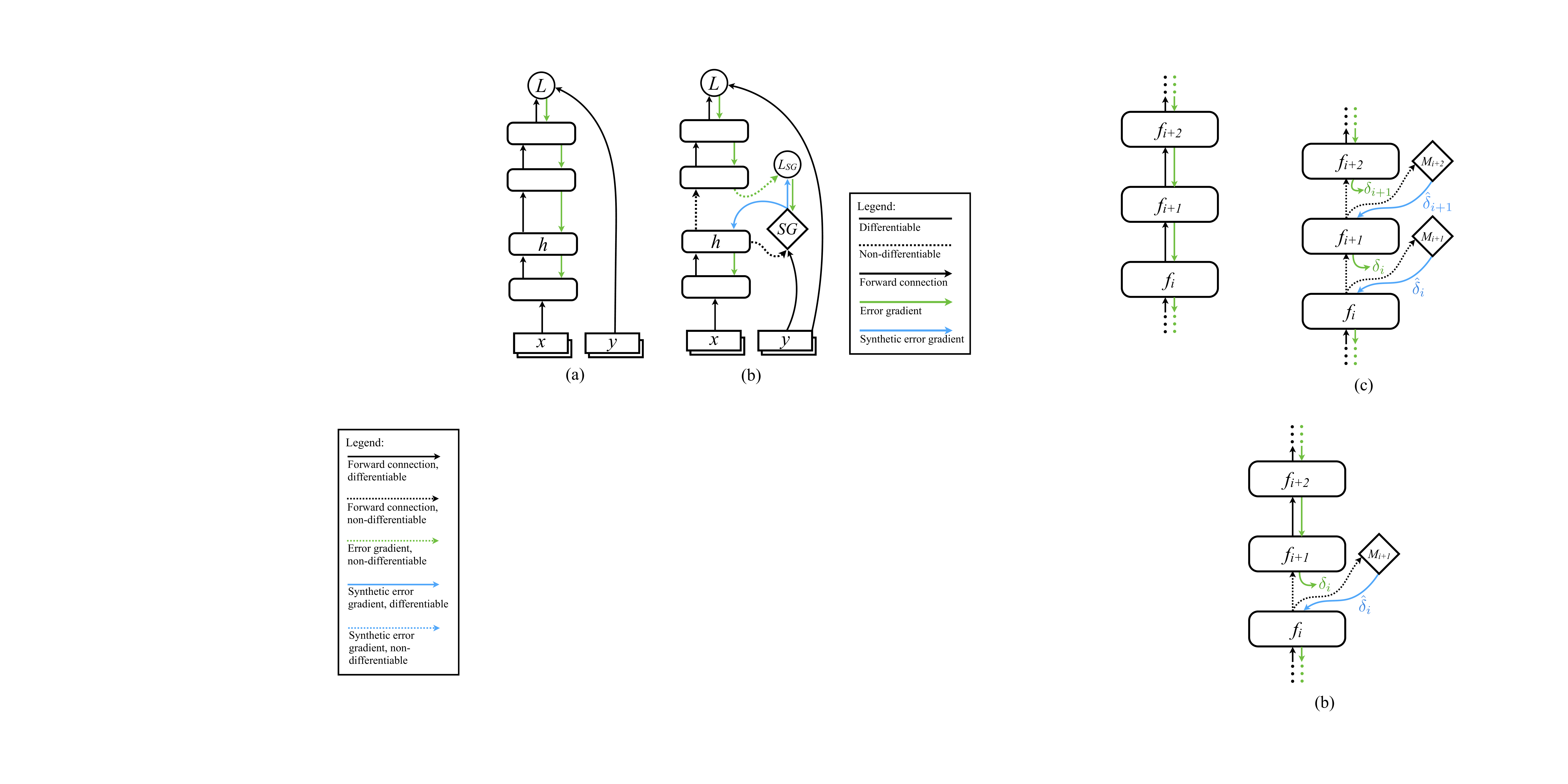}
\caption{Visualisation of \SG{}-based learning (b) vs. regular backpropagation (a).}
\label{fig:dni}
\end{figure}

One way of overcoming this issue is to apply Synthetic Gradients (\SG{}s) to build Decoupled Neural Interfaces (\DNI{}s)~\cite{DNI}. In this approach, models of error gradients are used to approximate the true error gradient. These models of error gradients are local to the network modules they are predicting the error gradient for, so that an update to the module can be computed by using the predicted, synthetic gradients, thus bypassing the need for subsequent forward execution, loss evaluation, and backpropagation. The gradient models themselves are trained at the same time as the modules they are feeding synthetic gradients to are trained. The result is effectively a complex dynamical system composed of multiple sub-networks cooperating to minimise the loss.

There is a very appealing potential of using DNIs \eg~the potential to distribute and parallelise training of networks across multiple GPUs and machines, the ability to asynchronously train multi-network systems, and the ability to extend the temporal modelling capabilities of recurrent networks. However, it is not clear that introducing \DNI{}s and \SG{}s into a learning system will not negatively impact the learning dynamics and solutions found. While the empirical evidence in \citet{DNI} suggests that \SG{}s do not have a negative impact and that this potential is attainable, this paper will dig deeper and analyse the result of using \SG{}s to accurately answer the question of the impact of synthetic gradients on learning systems.

In particular, we address the following questions, using feed-forward networks as our probe network architecture:
\textbf{Does introducing \SG{}s change the critical points of the neural network learning system?} In Section \ref{sec:rep_strength} we show that the critical points of the original optimisation problem are maintained when using \SG{}s.
\textbf{Can we characterise the convergence and learning dynamics for systems that use synthetic gradients in place of true gradients?} Section \ref{sec:learning_dynamics} gives first convergence proofs when using synthetic gradients and empirical expositions of the impact of \SG{}s on learning.
\textbf{What is the difference in the representations and functional decomposition of networks learnt with synthetic gradients compared to backpropagation?} Through experiments on deep neural networks in Section \ref{sec:larger_models}, we find that while functionally the networks perform identically trained with backpropagation or synthetic gradients, the layer-wise functional decomposition is markedly different due to \SG{}s.
%

In addition, in Section \ref{sec:fa} we look at formalising the connection between \SG{}s and other forms of approximate error propagation such as Feedback Alignment~\cite{lillicrap2016random}, Direct Feedback Alignment~\cite{NIPS2016_6441, baldi2016learning}, and Kickback~\cite{balduzzi2014kickback}, and show that all these error approximation schemes can be captured in a unified framework, but crucially only using synthetic gradients can one achieve unlocked training.

\vspace{-.1in}
\section{DNI using Synthetic Gradients} \label{sec:dni_intuitions}
{



}

The key idea of synthetic gradients and DNI is to approximate the true gradient of the loss with a learnt model which predicts gradients without performing full backpropagation.

Consider a feed-forward network consisting of $N$ layers $f_n, n \in \{1,\ldots,N\}$, each taking an input $h^{n-1}_i$ and producing an output $h^{n}_i = f_n(h^{n-1}_i)$, where $h^0_i=x_i$ is the input data point $x_i$. A loss is defined on the output of the network $L_i = L(h^N_i, y_i)$ where $y_i$ is the given label or supervision for $x_i$ (which comes from some unknown $P(y|x)$). Each layer $f_n$ has parameters $\theta_n$ that can be trained jointly to minimise $L_i$ with the gradient-based update rule
\begin{equation*}
\theta_n \leftarrow \theta_n - \alpha~\frac{\partial L(h_i^N,y_i)}{\partial h_i^n} \frac{\partial h_i^n}{\partial \theta_n}
\label{eqn:sgd}
\end{equation*}
where $\alpha$ is the learning rate and $\partial L_i/\partial h_i^n$ is computed with backpropagation.

The reliance on $\partial L_i/\partial h_i^N$ means that an update to layer $i$ can only occur after every subsequent layer $f_j, j \in  \{i+1,\ldots,N\}$ has been computed, the loss $L_i$ has been computed, and the error gradient $\partial L/\partial h_i^N$ backpropgated to get $\partial L_i/\partial h_i^N$. An update rule such as this is \emph{update locked} as it depends on computing $L_i$, and also \emph{backwards locked} as it depends on backpropagation to form $\partial L_i/\partial h_i^n$.

\citet{DNI} introduces a learnt prediction of the error gradient, the \emph{synthetic gradient} $\mathrm{SG}(h_i^n, y_i) = \widehat{\partial L_i/\partial h_i^n} \simeq \partial L_i/\partial h_i^n$ resulting in the update 
\begin{equation*}
\theta_k \leftarrow \theta_k - \alpha~\mathrm{SG}(h_i^n, y_i) \frac{\partial h_i^n}{\partial \theta_k}\;\;\; \forall k \leq n
\label{eqn:dni}
\end{equation*}
%
This approximation to the true loss gradient allows us to have both update and backwards unlocking -- the update to layer $n$ can be applied without any other network computation as soon as $h_i^n$ has been computed, since the \SG{} module is not a function of the rest of the network (unlike $\partial L_i/\partial h_i$). Furthermore, note that since the true $\partial L_i/\partial h_i^n$ can be described completely as a function of just $h_i^n$ and $y_i$, from a mathematical perspective this approximation is sufficiently parameterised.

The synthetic gradient module $\mathrm{SG}(h_i^n, y_i)$ has parameters $\theta_{\mathrm{SG}}$ which must themselves be trained to accurately predict the true gradient by minimising the L$_2$ loss $L_{\mathrm{SG}_i} = \| \mathrm{SG}(h_i^n, y_i) - \partial L_i/\partial h_i^n \|^2$. 

The resulting learning system consists of three decoupled parts: first, the part of the network above the \SG{} module which minimises $L$ wrt. to its parameters $\{\theta_{n+1}, ..., \theta_N\}$, then the \SG{} module that minimises the $L_{\mathrm{SG}}$ wrt. to $\theta_\mathrm{SG}$. Finally the part of the network below the \SG{} module which uses $\mathrm{SG}(h,y)$ as the learning signal to train $\{\theta_1, ... \theta_n\}$, thus it is minimising the loss modeled internally by \SG{}.

\vspace{-.1in}
\subsection*{Assumptions and notation}
\vspace{-.1in}

Throughout the remainder of this paper, we consider the use of a single synthetic gradient module at a single layer $k$ and for a generic data sample $j$ and so refer to $h=h_j=h^k_j$; unless specified we drop the superscript $k$ and subscript $j$. This model is shown in Figure~\ref{fig:dni} (b). We also focus on \SG{} modules which take the point's true label/value as conditioning $\mathrm{SG}(h, y)$ as opposed to $\mathrm{SG}(h)$. Note that without label conditioning, a \SG{} module is trying to approximate not $\partial L / \partial h$ but rather $\mathbb{E}_{P(y|x)} \partial L / \partial h$ since $L$ is a function of both input and label. In theory, the lack of label is a sufficient parametrisation but learning becomes harder, since the \SG{} module has to additionally learn $P(y|x)$.


We also focus most of our attention on models that employ \emph{linear} \SG{} modules, $\mathrm{\SG}(h, y)=hA + yB + C$. Such modules have been shown to work well in practice, and furthermore are more tractable to analyse.

As a shorthand, we denote $\theta_{<h}$ to denote the subset of the parameters contained in modules \emph{up to} $h$ (and symmetrically $\theta_{>h}$), \ie~if $h$ is the $k$th layer then $\theta_{<h} = \{\theta_1 \ldots, \theta_k\}$.

\vspace{-.1in}
\subsection*{Synthetic gradients in operation}
\vspace{-.1in}


Consider an $N$-layer feed-forward network with a single \SG{} module at layer $k$. This network can be decomposed into two sub-networks: the first takes an input $x$ and produces an output $h = F_h(x) = f_k(f_{k-1}(\ldots (f_1(x))))$, while the second network takes $h$ as an input, produces an output $p = F_p(h) = f_N(\ldots(f_{k+1}(h)))$ and incurs a loss $L = L(p, y)$ based on a label $y$. 

With regular backpropagation, the learning signal for the first network $F_h$ is $\partial L / \partial h$, which is a signal that specifies how the input to $F_p$ should be changed in order to reduce the loss. When we attach a linear \SG{} between these two networks, the first sub-network $F_h$ no longer receives the exact learning signal from $F_p$, but an approximation $\mathrm{SG}(h,y)$, which implies that $F_h$ will be minimising an approximation of the loss, because it is using approximate error gradients. Since the \SG{} module is a linear model of $\partial L / \partial h$, the approximation of the true loss that $F_h$ is being optimised for will be a quadratic function of $h$ and $y$. Note that this is \emph{not} what a second order method does when a function is locally approximated with a quadratic and used for optimisation -- here we are approximating the current loss, which is a function of parameters $\theta$ with a quadratic which is a function of $h$. Three appealing properties of an approximation based on $h$ is that $h$ already encapsulates a lot of non-linearities due to the processing of $F_h$, $h$ is usually vastly lower dimensional than $\theta_{<h}$ which makes learning more tractable, and the error only depends on quantities ($h$) which are local to this part of the network rather than $\theta$ which requires knowledge of the entire network.

With the \SG{} module in place, the learning system decomposes into two tasks: the second sub-network $F_p$ tasked with minimising $L$ given inputs $h$, while the first sub-network $F_h$ is tasked with pre-processing $x$ in such a way that the best fitted quadratic approximator of $L$ (wrt. $h$) is minimised. In addition, the \SG{} module is tasked with best approximating $L$.

The approximations and changing of learning objectives (described above) that are imposed by using synthetic gradients may appear to be extremely limiting. However, in both the theoretical and empirical sections of this paper we show that \SG{} models can, and do, learn solutions to highly non-linear problems (such as memorising noise). 

The crucial mechanism that allows such rich behaviour is to remember that the implicit quadratic approximation to the loss implied by the \SG{} module is local (per data point) and non-stationary -- it is continually trained itself. It is not a single quadratic fit to the true loss over the entire optimisation landscape, but a local quadratic approximation specific to each instantaneous moment in optimisation. In addition, because the quadratic approximation is a function only of $h$ and not $\theta$, the loss approximation is still highly non-linear w.r.t. $\theta$.

If, instead of a linear \SG{} module, one uses a more complex function approximator of gradients such as an MLP, the loss is effectively approximated by the integral of the MLP. More formally, the loss implied by the \SG{} module in hypotheses space $\mathcal{H}$ is of class $\{l: \exists g \in \mathcal{H}: \partial l / \partial h = g \}$\footnote{We mean equality for all points where $\partial l / \partial h$ is defined.}.
In particular, this shows an attractive mathematical benefit over predicting loss directly: by modelling gradients rather than losses, we get to implicitly model higher order loss functions.

\vspace{-.1in}
\section{Critical points} \label{sec:rep_strength}


We now consider the effect \SG{} has on critical points of the optimisation problem.
Concretely, it seems natural to ask whether a model augmented with \SG{} is capable of learning the same functions as the original model. We ask this question under the assumption of a locally converging training method, such that we always end up in a critical point. In the case of a \SG{}-based model this implies a set of parameters $\theta$ such that $\partial L / \partial \theta_{> h} = 0$, $\mathrm{\SG}(h, y) \partial h / \partial \theta_{<h}=0$ and $\partial L_{\mathrm{\SG{}}} / \partial \theta_{\mathrm{SG}} = 0$. In other words we are trying to establish whether \SG{} introduces regularisation to the model class, which changes the critical points, or whether it merely introduces a modification to learning dynamics, but retains the same set of critical points.

In general, the answer is positive: \SG{} does induce a regularisation effect. However, in the presence of additional assumptions, we can show families of models and losses for which the original critical points are not affected.

\begin{prop}
Every critical point of the original optimisation problem where \SG{} can produce $\partial L / \partial h_i$ has a corresponding critical point of the \SG-based model. 
\end{prop}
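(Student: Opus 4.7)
The plan is to start from a critical point $\theta^*$ of the original (backprop-trained) problem and construct, from it, a complete parameter setting $(\theta^*, \theta_{\mathrm{SG}}^*)$ that satisfies all three stationarity conditions of the \SG{}-based system, namely
\[
\frac{\partial L}{\partial \theta_{>h}} = 0, \quad \mathrm{SG}(h,y)\,\frac{\partial h}{\partial \theta_{<h}} = 0, \quad \frac{\partial L_{\mathrm{SG}}}{\partial \theta_{\mathrm{SG}}} = 0.
\]
The first condition is free, because the upper sub-network $F_p$ is still trained directly on $L$ in the \SG{} model, so it is inherited verbatim from criticality of the original problem. The work lies in engineering $\theta_{\mathrm{SG}}^*$ so that the remaining two conditions fall out together.

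The key step is to invoke the hypothesis of the proposition: at $\theta^*$ the true gradient $\partial L/\partial h_i$ (over the data points whose activations $h_i$ are produced) lies in the image of the \SG{} module's hypothesis class. I would pick $\theta_{\mathrm{SG}}^*$ to be a parameter attaining this fit. Since $L_{\mathrm{SG}}$ is the squared error between $\mathrm{SG}(h,y)$ and $\partial L/\partial h$, fitting it exactly drives $L_{\mathrm{SG}}$ to its global minimum of zero at $\theta_{\mathrm{SG}}^*$, and hence $\partial L_{\mathrm{SG}}/\partial \theta_{\mathrm{SG}} = 0$ there, giving the third condition.

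With $\mathrm{SG}(h,y) = \partial L/\partial h$ at $(\theta^*, \theta_{\mathrm{SG}}^*)$, the second condition collapses to
\[
\frac{\partial L}{\partial h}\,\frac{\partial h}{\partial \theta_{<h}} = 0,
\]
which is precisely the backprop stationarity condition for $\theta_{<h}$ at the original critical point $\theta^*$; so it holds by assumption. This closes the argument: $(\theta^*, \theta_{\mathrm{SG}}^*)$ is a critical point of the \SG{}-based system corresponding to $\theta^*$.

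The main obstacle I foresee is purely one of careful bookkeeping around the assumption "where \SG{} can produce $\partial L/\partial h_i$"; it must be interpreted as stating that the true gradient at $\theta^*$ is realizable in the \SG{} hypothesis class (so that $L_{\mathrm{SG}}$ attains zero, and is thus a minimum and \emph{a fortiori} a critical point). Without that expressiveness assumption the third condition could fail, so the proposition is genuinely conditional; this is also consistent with the paper's observation that in general \SG{} does induce a regularisation effect and can alter the critical set. No nontrivial calculation is required beyond identifying the three stationarity conditions and checking them in turn, so the proof should be short once the assumption is unpacked precisely.
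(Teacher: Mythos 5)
Your proposal is correct and follows essentially the same route as the paper's own (very terse) proof: choose $\theta_{\mathrm{SG}}$ realizing the exact fit $\mathrm{SG}(h,y)=\partial L/\partial h$, observe that $L_{\mathrm{SG}}=0$ is a global minimum so $\partial L_{\mathrm{SG}}/\partial\theta_{\mathrm{SG}}=0$, and note that the remaining two stationarity conditions then reduce to those of the original critical point. Your version simply makes explicit the bookkeeping (the three conditions and the inheritance of $\partial L/\partial\theta_{>h}=0$) that the paper leaves implicit.
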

\vspace{-.2in}
\begin{proof}
Directly from the assumption we have that there exists a set of \SG{} parameters such that the loss is minimal, thus $\partial L_{\SG} / \partial \theta_{\mathrm{SG}} = 0$ and also $\mathrm{SG}(h,y) = \partial L / \partial h$ and $\mathrm{\SG}(h, y) \partial h / \partial \theta_{<h}=0$.
\end{proof}
\vspace{-.1in}
The assumptions of this proposition are true for example when $L=0$ (one attains global minimum), when $\partial L / \partial h_i = 0$ or a network is a deep linear model trained with MSE and \SG{} is linear.

In particular, this shows that for a large enough \SG{} module all the critical points of the original problem have a corresponding critical point in the \SG-based model.
Limiting the space of \SG{} hypotheses leads to inevitable reduction of number of original critical points, thus acting as a regulariser. 
At first this might look like a somewhat negative result, since in practice we rarely use a \SG{} module capable of exactly producing true gradients.
However, there are three important observations to make: (1) Our previous observation reflects having an exact representation of the gradient at the critical point, not in the whole parameter space. (2) One does preserve all the critical points where the loss is zero, and given current neural network training paradigms these critical points are important. For such cases even if \SG{} is linear the critical points are preserved. (3) In practice one rarely optimises to absolute convergence regardless of the approach taken; rather we obtain numerical convergence meaning that $\|\partial L / \partial \theta\|$ is small enough. Thus, all one needs from \SG-based model is to have small enough $\|(\partial L / \partial h + e) \partial h / \partial \theta_{< h}\| \leq \|\partial L / \partial \theta_{< h} \| + \|e\| \| \partial h / \partial \theta_{< h} \|$, implying that the approximation error at a critical point just has to be small wrt to $\|\partial h / \partial \theta_{< h}\|$ and need not be 0.

To recap: so far we have shown that \SG{} can preserve critical points of the optimisation problem
. However, \SG{} can also introduce \emph{new} critical points, leading to premature convergence and spurious additional solutions. As with our previous observation, this does not effect \SG{} modules which are able to represent gradients exactly. But if the \SG{} hypothesis space does not include a good approximator\footnote{In this case, our gradient approximation needs to be reasonable at every point through optimisation, not just the critical ones.} of the true gradient, then we can get new critical points which end up being an equilibrium state between \SG{} modules and the original network. We provide an example of such an equilibrium in the Supplementary Materials Section~\ref{SM:examples}. 

\vspace{-.1in}
\section{Learning dynamics} \label{sec:learning_dynamics}

Having demonstrated that important critical points are preserved and also that new ones might get created, we need a better characterisation of the basins of attraction, and to understand when, in both theory and practice, one can expect convergence to a good solution.
%
\vspace{-.1in}
\subsection*{Artificial Data}
\vspace{-.1in}

We conduct an empirical analysis of the learning dynamics on easily analysable artificial data. We create 2 and 100 dimensional versions of four basic datasets (details in the Supplementary Materials Section~\ref{SM:experiments}) and train four simple models (a linear model and a deep linear one with 10 hidden layers, trained to minimise MSE and log loss) with regular backprop and with a \SG{}-based alternative to see whether it (numerically) converges to the same solution.

For MSE and both shallow and deep linear architectures the \SG-based model converges to the global optimum  (exact numerical results provided in Supplementary Material Table~\ref{tab:diffs}). However, this is not the case for logistic regression. This effect is a direct consequence of a linear \SG{} module being unable to model $\partial L / \partial p$\footnote{$\partial L / \partial p = \exp(xW+b) / (1 + \exp(xW+b)) - y$} (where $p=xW+b$ is the output of logistic regression), which often approaches the step function (when data is linearly separable), and cannot be well approximated with a linear function $\mathrm{\SG}(h, y)=hA + yB + C$. 
Once one moves towards problems without this characteristic (\eg~random labeling) the problem vanishes, since now $\partial L / \partial p$ can be approximated much better. While this may not seem particularly significant, it illustrates an important characteristic of \SG{} in the context of the log loss -- it will struggle to overfit to training data, since it requires modeling step function type shapes, which is not possible with a linear model. In particular this means that for best performance one should adapt the \SG{} module architecture to the loss function used \textemdash for MSE linear \SG{} is a reasonable choice, however for log loss one should use architectures including a sigmoid $\sigma$ applied pointwise to a linear \SG{}, such as $\mathrm{SG}(h,y) = d\sigma(hA) + yB + C$.

\begin{figure*}[h]
\begin{tikzpicture}
    \node[anchor=north west] at (0,3.5) {
        \includegraphics[height=3.1cm]{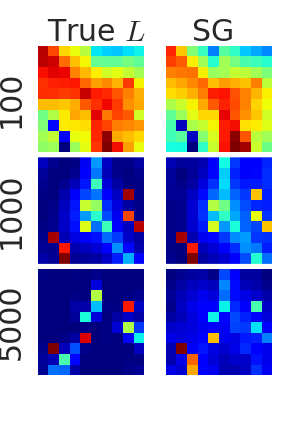}
        \includegraphics[height=3.1cm]{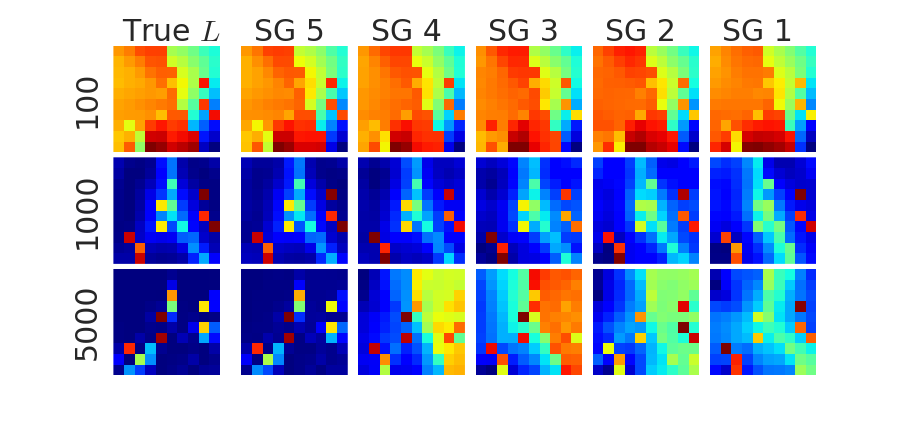}
        \includegraphics[height=3.1cm]{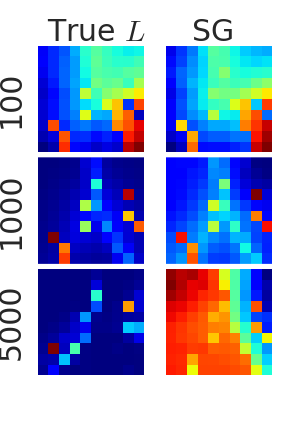}
        \includegraphics[height=3.1cm]{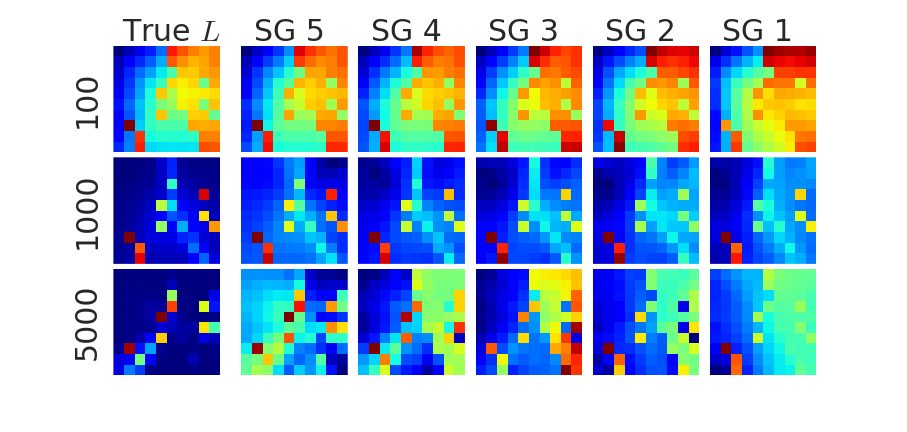}
    };
    \node[anchor=north west] at (0,0) {
        \includegraphics[height=3.1cm]{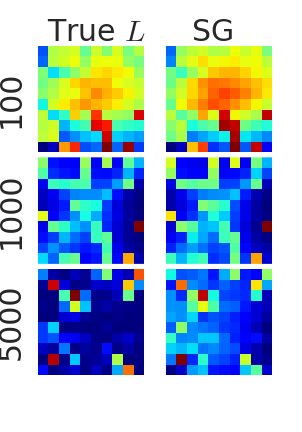}
        \includegraphics[height=3.1cm]{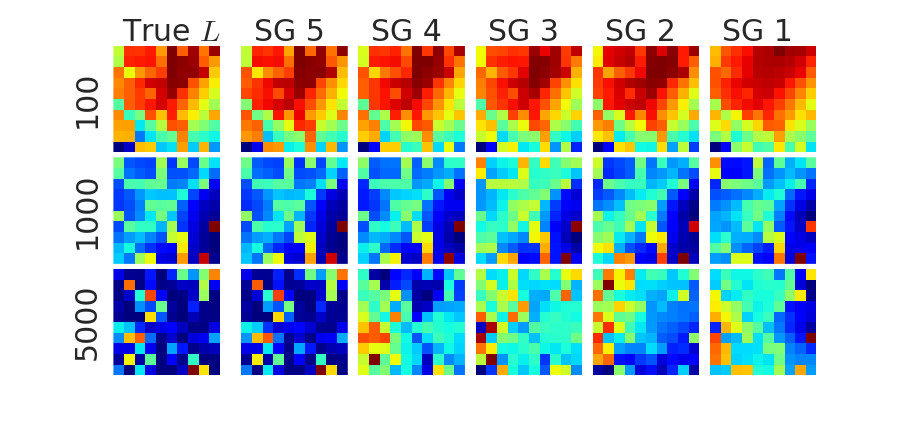}
        \includegraphics[height=3.1cm]{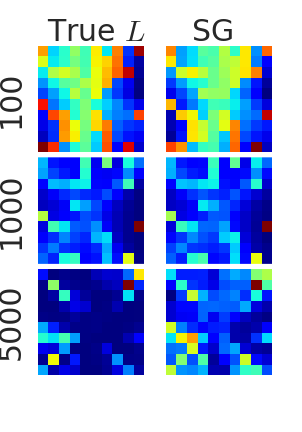}
        \includegraphics[height=3.1cm]{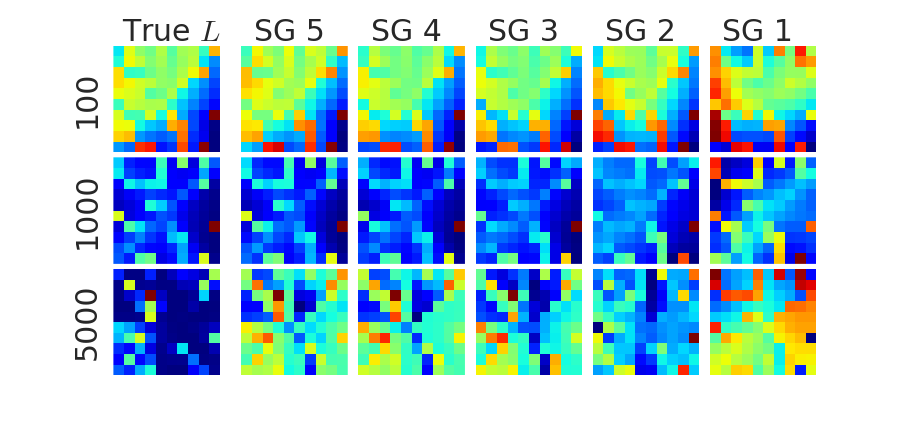}
    };
    \draw (8.85, -3.5) -- (8.85, 3.5);
    \draw (0, 0) -- (17, 0);
    
    \node[anchor=west, align=left, rotate=90] at (-0.05, 0.75) {Train iteration};
    
    \node[anchor=west, align=left] at (0.35,3.5) {Single SG};
    \node[anchor=west, align=left] at (4.35,3.5) {Every layer SG};
    
    \node[anchor=west, align=left] at (0.05,0.25) {a) MSE, noisy linear data};

    \node[anchor=west, align=left] at (8.9 + 0.3,3.5) {Single SG};
    \node[anchor=west, align=left] at (8.9 + 4.3,3.5) {Every layer SG};

    \node[anchor=west, align=left] at (8.9,0.25) {b) log loss, noisy linear data};

    \node[anchor=west, align=left, rotate=90] at (-0.05, -2.75) {Train iteration};
    \node[anchor=west, align=left] at (0.05,-3.25) {c) MSE, randomly labeled data};

    \node[anchor=west, align=left] at (8.9,-3.25) {d) log loss, randomly labeled data};
\end{tikzpicture}
\caption{Visualisation of the true MSE loss and the loss approximation reconstructed from \SG{} modules, when learning points are arranged in a 2D grid, with linearly separable 90\% of points and 10\% with randomly assigned labels (top row) and with completely random labels (bottom row). The model is a 6 layers deep relu network. Each image consists of visualisations for a model with a single \SG{} (left part) and with \SG{} between every two layers (on the right). Note, that each image has an independently scaled color range, since we are only interested in the shape of the surface, not particular values (which cannot be reconstructed from the \SG). Linear \SG{} tracks the loss well for MSE loss, while it struggles to fit to log loss towards the end of the training of nearly separable data. Furthermore, the quality of loss estimation degrades towards the bottom of the network when multiple \SG{}s bootstrap from each other.}
\label{fig:lossrec}
\end{figure*}

As described in Section 2, using a linear \SG{} module makes the implicit assumption that loss is a quadratic function of the activations. Furthermore, in such setting we can actually reconstruct the loss being used up to some additive constant since $\partial L / \partial h = hA + yB + C$ implies that $L(h) = \tfrac{1}{2} hAh^T + (yB + C)h^T + const$. If we now construct a 2-dimensional dataset, where data points are arranged in a 2D grid, we can visualise the loss implicitly predicted by the \SG{} module and compare it with the true loss for each point. 

Figure~\ref{fig:lossrec} shows the results of such an experiment when learning a highly non-linear model (5-hidden layer relu network). As one can see, the quality of the loss approximation has two main components to its dynamics. First, it is better in layers closer to the true loss (\ie~the topmost layers), which matches observations from \citet{DNI} and the intuition that the lower layers solve a more complex problem (since they bootstrap their targets). Second, the loss is better approximated at the very beginning of the training and the quality of the approximation degrades slowly towards the end. This is a consequence of the fact that close to the end of training, the highly non-linear model has quite complex derivatives which cannot be well represented in a space of linear functions. It is worth noting, that in these experiments, the quality of the loss approximation dropped significantly when the true loss was around 0.001, thus it created good approximations for the majority of the learning process. There is also an empirical confirmation of the previous claim, that with log loss and data that can be separated, linear \SG{}s will have problems modeling this relation close to the end of training (Figure~\ref{fig:lossrec} (b) left), while there is no such problem for MSE loss (Figure~\ref{fig:lossrec} (a) left).

\vspace{-.1in}
\subsection*{Convergence}
\vspace{-.1in}

It is trivial to note that if a \SG{} module used is globally convergent to the true gradient, and we only use its output after it converges, then the whole model behaves like the one trained with regular backprop. However, in practice we never do this, and instead train the two models in parallel without waiting for convergence of the \SG{} module. We now discuss some of the consequences of this, and begin by showing that as long as a synthetic gradient produced is close enough to the true one we still get convergence to the true critical points. Namely, only if the error introduced by \SG{}, backpropagated to all the parameters, is consistently smaller than the norm of true gradient multiplied by some positive constant smaller than one, the whole system converges. Thus, we essentially need the \SG{} error to vanish around critical points.

\begin{prop}
Let us assume that a \SG{} module is trained in each iteration in such a way that it $\epsilon$-tracks true gradient, i.e.  that $\|\mathrm{\SG}(h, y) - \partial L/\partial h\| \leq \epsilon$. If $\|\partial h / \partial \theta_{<h}\|$ is upper bounded by some $K$ and there exists a constant $\delta \in (0,1)$ such that in every iteration $\epsilon K \leq \|\partial L/\partial \theta_{<h}\| \tfrac{1-\delta}{1+\delta}$, then the whole training process converges to the solution of the original problem.
\label{prop:epsilon}
\end{prop}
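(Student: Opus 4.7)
My plan is to cast the SG-based update on $\theta_{<h}$ as an \emph{inexact} (perturbed) gradient step on the true loss $L$, and then invoke the classical convergence result for descent methods driven by directions that make a bounded-away-from-$90^\circ$ angle with the true gradient. Write $g := \partial L / \partial \theta_{<h} = (\partial L / \partial h)(\partial h / \partial \theta_{<h})$ for the true gradient of the lower block, and $\widetilde{g} := \mathrm{SG}(h,y)\,(\partial h / \partial \theta_{<h})$ for the direction actually used by SG. By the chain rule and the $\epsilon$-tracking hypothesis,
\begin{equation*}
\|\widetilde{g} - g\| \;=\; \|(\mathrm{SG}(h,y) - \partial L/\partial h)\,\partial h / \partial \theta_{<h}\| \;\leq\; \epsilon K.
\end{equation*}
The hypothesis $\epsilon K \leq \|g\|\,\tfrac{1-\delta}{1+\delta}$ then says that the perturbation $e := \widetilde{g} - g$ has norm uniformly controlled by a fraction strictly less than the norm of the true gradient.

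Next I would establish the two standard "sufficient descent direction" bounds. Using Cauchy--Schwarz,
\begin{equation*}
\langle \widetilde{g}, g\rangle \;=\; \|g\|^2 + \langle e, g\rangle \;\geq\; \|g\|^2 - \|e\|\,\|g\| \;\geq\; \|g\|^2\!\left(1 - \tfrac{1-\delta}{1+\delta}\right) \;=\; \tfrac{2\delta}{1+\delta}\,\|g\|^2,
\end{equation*}
while the triangle inequality gives $\|\widetilde{g}\| \leq \|g\| + \|e\| \leq \tfrac{2}{1+\delta}\|g\|$. Combining these,
\begin{equation*}
\frac{\langle \widetilde{g}, g\rangle}{\|\widetilde{g}\|\,\|g\|} \;\geq\; \frac{(2\delta/(1+\delta))\|g\|^2}{(2/(1+\delta))\|g\|^2} \;=\; \delta \;>\; 0,
\end{equation*}
so $\widetilde{g}$ is a descent direction for $L$ whose cosine-similarity with the true gradient is bounded below by $\delta$ at every iterate, and whose magnitude is comparable to $\|g\|$.

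With this in hand, the convergence of the joint system reduces to two standard facts. For $\theta_{>h}$ (the part above the SG module) the updates use the exact gradient $\partial L/\partial \theta_{>h}$, so ordinary gradient-descent convergence applies. For $\theta_{<h}$, the update $\theta_{<h} \leftarrow \theta_{<h} - \alpha\,\widetilde{g}$ is an instance of gradient descent with a direction satisfying the Zoutendijk angle condition ($\cos \angle(\widetilde{g},g) \geq \delta$) and proportional length, so for a sufficiently small step size (or one selected by a standard line-search/Armijo rule) we get the descent inequality $L(\theta^{t+1}) \leq L(\theta^t) - c\,\|g^t\|^2$ for some $c > 0$. Summing over iterations yields $\sum_t \|g^t\|^2 < \infty$ and hence $\|g^t\| \to 0$; combined with the analogous statement for $\theta_{>h}$, the full parameter vector converges to a critical point of the original problem. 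Note the limit point is a critical point of the \emph{original} loss (not a spurious SG-induced one), precisely because the hypothesis forces $\epsilon K \to 0$ wherever $\|g\|\to 0$, which is the content of the third observation following Proposition~1.

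The main obstacle is that the hypothesis $\epsilon K \leq \|g\|\tfrac{1-\delta}{1+\delta}$ is assumed to hold \emph{at every iteration}, and in particular is required to remain compatible as $\|g\|\to 0$. I would be explicit that the statement is conditional on this tracking condition being maintained throughout training (it is not claimed that any fixed SG training scheme will produce it), and I would briefly remark that the standard smoothness/bounded-stepsize assumptions needed to convert the descent direction property into the descent inequality are implicit in "the original problem" (they are the same assumptions under which plain SGD/GD on $L$ is known to converge).
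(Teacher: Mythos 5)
Your proposal is correct and follows essentially the same route as the paper: both recast the SG update on $\theta_{<h}$ as a perturbed gradient step with noise bounded by $\epsilon K \leq \|\partial L/\partial\theta_{<h}\|\tfrac{1-\delta}{1+\delta}$, deduce that the cosine between the used direction and the true gradient is bounded below by $\delta$, and invoke the Zoutendijk-type angle condition to conclude convergence. The only difference is that you explicitly carry out the Cauchy--Schwarz/triangle-inequality computation showing the cosine bound equals exactly $\delta$ (which the paper merely asserts), and you state the implicit smoothness/step-size caveats more carefully.
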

\vspace{-.2in}
\begin{proof}
Proof follows from showing that, under the assumptions, effectively we are training with noisy gradients, where the noise is small enough for convergence guarantees given by \citet{zoutendijk1970nonlinear, gratton2011much} to apply. Details are provided in the Supplementary Materials Section~\ref{SM:proofs}.
\end{proof}
\vspace{-.1in}
As a consequence of  Proposition \ref{prop:epsilon} we can show that with specifically chosen learning rates (not merely ones that are small enough) we obtain convergence for deep linear models.

\begin{cor}
For a deep linear model minimising MSE, trained with a linear \SG{} module attached between two of its hidden layers, there exist learning rates in each iteration such that it converges to the critical point of the original problem.
\end{cor}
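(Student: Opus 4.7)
The plan is to invoke Proposition~\ref{prop:epsilon} with carefully engineered per-iteration step sizes for both the main network and the \SG{} module. The argument breaks into three ingredients: (i) show that a linear \SG{} module has the expressive capacity to represent the true gradient $\partial L/\partial h$ exactly at every point of parameter space when the model is deep linear and the loss is MSE; (ii) show that the \SG{} parameters can be tuned within a single outer iteration to drive the tracking error below any prescribed threshold; (iii) choose the outer learning rate $\alpha_t$ so that the inequality required by Proposition~\ref{prop:epsilon} holds iteration by iteration.

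For (i), observe that if the subnetwork above $h$ is a product of linear maps $W_{k+1},\ldots,W_N$ with accumulated bias $b_{\mathrm{eff}}$, then $p = h\,W_{k+1}\cdots W_N + b_{\mathrm{eff}}$ and, for MSE loss, $\partial L/\partial h = 2(p - y)(W_{k+1}\cdots W_N)^{\top}$, which is affine in $h$ and $y$. Hence the hypothesis class $\mathrm{SG}(h,y) = hA + yB + C$ contains the exact gradient map for every configuration of the above-$h$ parameters, so the \SG{} training objective is a convex quadratic in $(A,B,C)$ whose realisable global minimum is zero in expectation.

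For (ii), this realisability implies that either a closed-form least-squares solve or sufficiently many inner gradient steps of the \SG{} loss will produce, at outer iteration $t$, a linear \SG{} module with tracking error as small as any chosen $\epsilon_t > 0$. For (iii), whenever the current iterate is not already critical, $\|\partial L/\partial \theta_{<h}\| > 0$, so we may set $\epsilon_t = \tfrac{1-\delta}{2K(1+\delta)}\,\|\partial L/\partial \theta_{<h}\|$ for a fixed $\delta \in (0,1)$, which automatically satisfies the condition $\epsilon_t K \leq \|\partial L/\partial \theta_{<h}\|\tfrac{1-\delta}{1+\delta}$ of Proposition~\ref{prop:epsilon}. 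The outer step size $\alpha_t$ is then chosen small enough to meet the Zoutendijk-style step-size hypotheses underlying that proposition; the content of the corollary is precisely that such an admissible pair $(\alpha_t,\epsilon_t)$ exists at each iteration. Invoking Proposition~\ref{prop:epsilon} closes the argument.

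The principal obstacle is controlling the constant $K$ that uniformly upper bounds $\|\partial h/\partial \theta_{<h}\|$ along the trajectory. In a deep linear network this quantity is a polynomial in the lower-layer weights and the input and could a priori blow up during training. The standard fix is to restrict attention to a sublevel set of $L$, which MSE makes bounded along any monotonically decreasing trajectory, and to pick the outer step sizes conservatively enough that the iterates remain inside a pre-specified compact region on which such polynomial expressions are uniformly bounded; the per-iteration scheduling argument above then goes through on that region.
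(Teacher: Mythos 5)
Your overall architecture matches the paper's: both arguments rest on (a) the observation that for a deep linear network with MSE the true gradient $\partial L/\partial h$ is an affine function of $h$ and $y$, so the linear hypothesis class $hA+yB+C$ contains it exactly and the \SG{} fitting problem is a realisable convex regression, and (b) a reduction to Proposition~\ref{prop:epsilon} by keeping the tracking error below the $\tfrac{1-\delta}{1+\delta}\|\partial L/\partial\theta_{<h}\|/K$ threshold at every iteration. The place where you diverge from what the corollary actually licenses is your ingredient (ii): driving the tracking error below an arbitrary $\epsilon_t$ \emph{within a single outer iteration} by ``a closed-form least-squares solve or sufficiently many inner gradient steps'' is not a choice of learning rate --- it changes the training protocol to one with an inner loop, and a single gradient step on the \SG{} loss with any step size cannot in general reach an arbitrarily small residual from an arbitrary starting point. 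The paper resolves exactly this difficulty differently: it keeps one \SG{} update per iteration (with rate $\nu$ found by line search) and instead modulates the \emph{main} network's learning rate, setting $\mu = \epsilon\,\max\bigl(0,\ \|e\| - \tfrac{1}{3\|\partial h/\partial\theta_{<h}\|}\|\partial L/\partial\theta_{<h}\|\bigr)$, so that whenever the \SG{} error is too large the lower network freezes ($\mu=0$ satisfies the required inequality vacuously) and the \SG{} regression is allowed to converge over several outer iterations toward its (linearly representable) target; only once $\|e\|\,\|\partial h/\partial\theta_{<h}\| < \tfrac13\|\partial L/\partial\theta_{<h}\|$, i.e.\ the Proposition~2 condition with $\delta = 0.5$, does the main model move. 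If you replace your inner-loop step with this $\mu$-gating mechanism, your proof goes through and is faithful to the statement.

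One point in your favour: your final paragraph about uniformly bounding $K=\sup\|\partial h/\partial\theta_{<h}\|$ along the trajectory by confining the iterates to a compact sublevel set addresses a real issue that the paper's proof passes over in silence; the paper simply assumes the bound $K$ from Proposition~2 is available. Your confinement argument (conservative step sizes keep the iterates in a region where this polynomial quantity is bounded) is a legitimate and worthwhile addition, though to make it airtight you would also want to note that the \SG{}-driven updates on $\theta_{<h}$ are descent-like only once the tracking condition holds, which is precisely what the gating mechanism guarantees.
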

\vspace{-.2in}
\begin{proof}
Proof follows directly from Propositions 1 and 2.
Full proof is given in Supplementary Materials Section~\ref{SM:proofs}.
\end{proof}
\vspace{-.1in}
\begin{figure*}[hbt]
\includegraphics[width=0.45\textwidth]{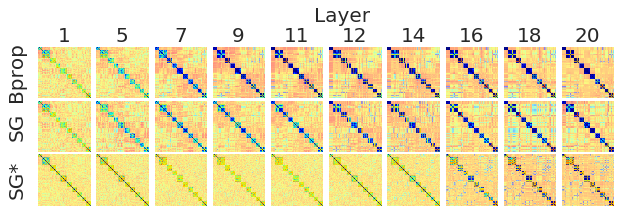}
\includegraphics[width=0.55\textwidth]{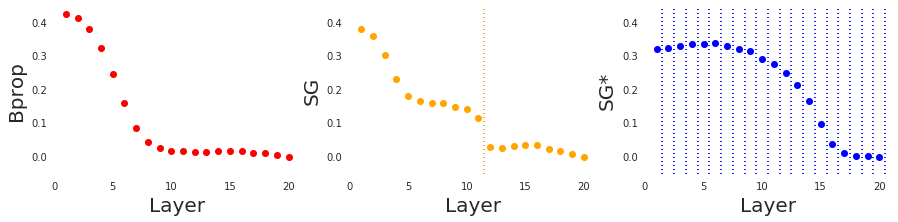}
\caption{(left) Representation Dissimilarity Matrices for a label ordered sample from MNIST dataset pushed through 20-hidden layers deep relu networks trained with backpropagation (top row), a single \SG{} attached between layers 11 and 12 (middle row) and \SG{} between every pair of layers (bottom row). Notice the appearance of dark blue squares on a diagonal in each learning method, which shows when a clear inner-class representation has been learned. For visual confidence off block diagonal elements are semi transparent. (right) L$_2$ distance between diagonal elements at a given layer and the same elements at layer 20. Dotted lines show where \SG{}s are inserted.}
\label{fig:rdms}
\end{figure*}
For a shallow model we can guarantee convergence to the global solution provided we have a small enough learning rate, which is the main theoretical result of this paper.


\begin{thm}
Let us consider linear regression trained with a linear \SG{} module attached between its output and the loss. If one chooses the learning rate of the \SG{} module using line search, then in every iteration there exists small enough, positive learning rate of the main network such that it converges to the global solution.
\end{thm}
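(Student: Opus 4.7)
The plan is to reduce the theorem to Proposition~\ref{prop:epsilon}. The pivotal observation is that for linear regression $p = Wx + b$ with MSE loss $L = \|p-y\|^2$, the true output gradient $\partial L/\partial p = 2(p-y)$ is \emph{exactly representable} in the hypothesis class of a linear \SG{} module $\mathrm{SG}(p,y) = pA + yB + C$, namely by $(A^\star, B^\star, C^\star) = (2I, -2I, 0)$. Hence $L_{\mathrm{SG}}$ is a convex quadratic in the \SG{} parameters whose minimum value is zero, attained at the same point $(A^\star, B^\star, C^\star)$ regardless of the current main-network state; the \SG{} approximation error at any $(A,B,C)$ is therefore controlled by the deviation from a \emph{fixed} optimum.

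First I would analyse the \SG{} subsystem in isolation. Because $L_{\mathrm{SG}}$ is a convex quadratic in $(A,B,C)$, exact line-search gradient descent along $-\nabla_{(A,B,C)} L_{\mathrm{SG}}$ strictly decreases $L_{\mathrm{SG}}$ unless we are already at the minimum, and the standard analysis of steepest descent on strictly convex quadratics gives geometric convergence of $(A,B,C)$ to $(A^\star, B^\star, C^\star)$, so $\epsilon_t := \|\mathrm{SG}_t(h,y) - \partial L/\partial h\|$ tends to zero. The rate depends on the condition number of the \SG{}-Hessian, which stays bounded as long as the main parameters remain in a bounded region with non-degenerate data covariance.

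Next I would verify the hypothesis of Proposition~\ref{prop:epsilon} iteration by iteration. Fix any $\delta\in(0,1)$ and note that $\|\partial h/\partial\theta_{<h}\|$ is bounded by some constant $K$ on any bounded set of linear-regression parameters. By taking $\alpha_t$ sufficiently small at every iteration, we keep the main parameters in such a region and in particular prevent $\|g_t\| := \|\partial L/\partial\theta_{<h}\|$ from collapsing faster than $\epsilon_t$. Since $\epsilon_t \to 0$ \emph{independently} of $\alpha_t$ (the \SG{} line search makes progress even when $\alpha_t$ is tiny), at each iteration we can always choose $\alpha_t > 0$ small enough that $\epsilon_t K \leq \|g_t\|\tfrac{1-\delta}{1+\delta}$ continues to hold. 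Proposition~\ref{prop:epsilon} then yields convergence to a critical point of the original problem, and by convexity of MSE linear regression this critical point is the global minimum.

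The main obstacle is precisely this coordination: one has to rule out the scenario where the admissible $\alpha_t$ collapses to zero so fast that $\sum_t \alpha_t \|g_t\|^2 < \infty$ and the iterates stall before reaching the optimum. The decisive lever is that $\epsilon_t$ decays geometrically from the \SG{} line search alone; once \SG{} is sufficiently accurate, the Proposition~\ref{prop:epsilon} condition becomes slack so the admissible $\alpha_t$ lies in a uniformly positive interval, and since $\|g_t\|$ stays bounded below whenever the main parameters are away from the global optimum, each admissible step makes genuine progress. This is where I would expect the detailed Supplementary proof to concentrate its bookkeeping.
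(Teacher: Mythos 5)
Your reduction to Proposition~2 is a genuinely different route from the paper's: the paper proves this theorem directly, by rewriting the coupled updates as a dynamical system in the prediction error $\ee_k = (\y-\p_k)^T$ and the \SG{} error in prediction space $\xi_k = \A_k\omega_k$, and then showing that with line search on the \SG{} learning rate and a small enough main learning rate the combined norm $\|\f_k\|_2 + \|\xi_k\|_2$ strictly decreases, which forces convergence to the global attractor via a compactness lemma. Your plan is closer to how the paper handles the \emph{deep} linear case (Corollary~1). The difficulty is that the step you identify as "the main obstacle" is in fact the entire content of the theorem, and your proposal describes it rather than resolves it. Concretely, the claim that $\epsilon_t \to 0$ geometrically "independently of $\alpha_t$" is not justified: $L_{\mathrm{SG}}$ is a quadratic in $(A,B,C)$ whose Hessian and whose target $\partial L/\partial p = p - y$ both depend on the current outputs $p_t$, which move whenever the main network moves. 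A single exact line-search step per iteration decreases the \emph{current} quadratic, but after the main network updates, the tracking error is re-evaluated on a new quadratic, and it can increase by an amount proportional to the main step. So the decay of $\epsilon_t$ is inextricably coupled to $\alpha_t$; establishing that the decrease beats the increase is precisely the bookkeeping the paper performs with the coupled inequality on $\|\f_{k}\|_2 + \|\xi_{k}\|_2$, and it is absent from your argument.

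There are two further gaps in the reduction itself. First, the hypothesis of Proposition~2 fails at initialization: the \SG{} module starts by outputting zero, so $\epsilon_0 = \|\partial L/\partial p\|$ is of the same order as the true gradient, and $\epsilon_0 K \leq \|\partial L/\partial \theta_{<h}\|\tfrac{1-\delta}{1+\delta}$ need not hold for any positive $\alpha_0$; since the condition is a constraint on the \emph{current state} rather than on the step size, shrinking $\alpha_t$ does not repair it, and you would need an explicit burn-in argument showing the condition eventually becomes and remains satisfied. Second, even granting the tracking condition at every iteration, Proposition~2 rests on Zoutendijk-type results whose conclusions also require the chosen step sizes not to vanish too fast (otherwise the iterates stall at a non-critical point); your strategy of repeatedly shrinking $\alpha_t$ to preserve the inequality is exactly the regime in which those guarantees can fail, and the assertion that the admissible $\alpha_t$ eventually lies in a uniformly positive interval needs a quantitative comparison between the decay rate of $\epsilon_t$ and that of $\|\partial L/\partial\theta_{<h}\|$ near the optimum -- again, the coupled estimate that the paper's direct Lyapunov-style argument supplies and your proposal does not.
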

\vspace{-.2in}
\begin{proof}
The general idea (full proof in the Supplementary Materials Section~\ref{SM:proofs}) is to show that with assumed learning rates the sum of norms of network error and \SG{} error decreases in every iteration.
\end{proof}
\vspace{-.1in}
Despite covering a quite limited class of models, these are the very first convergence results for \SG{}-based learning. Unfortunately, they do not seem to easily generalise to the non-linear cases, which we leave for future research. 



\vspace{-.1in}
\section{Trained models} \label{sec:larger_models}


We now shift our attention to more realistic data. We train deep relu networks of varied depth (up to 50 hidden layers) with batch-normalisation and with two different activation functions on MNIST and compare models trained with full backpropagation to variants that employ a \SG{} module in the middle of the hidden stack.

\begin{figure}[h]
\includegraphics[width=0.5\textwidth]{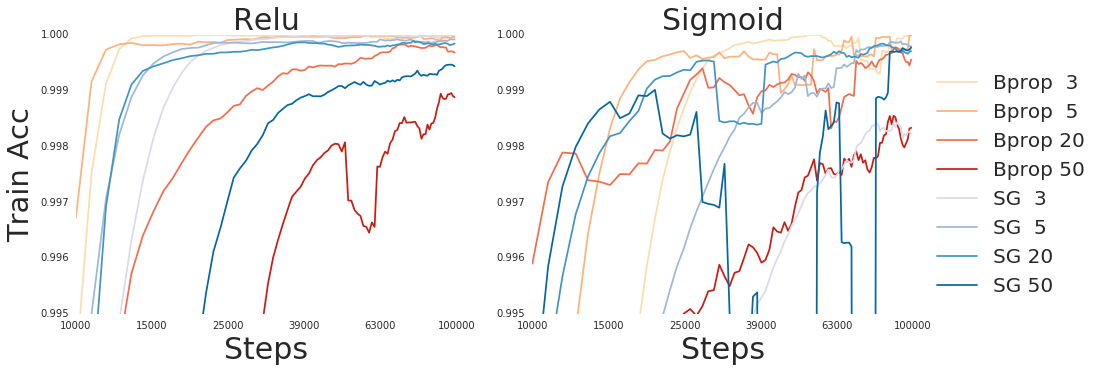}
\caption{Learning curves for MNIST experiments with backpropagation and with a single \SG{} in a stack of from 3 to 50 hidden layers using one of two activation functions: relu and sigmoid.}
\label{fig:mnist}
\end{figure}

Figure~\ref{fig:mnist} shows, that \SG{}-based architectures converge well even if there are many hidden layers both below and above the module.  Interestingly, \SG{}-based models actually seem to converge faster (compare for example 20- or 50 layer deep relu network). We believe this may be due to some amount of loss function smoothing since, as described in Section \ref{sec:dni_intuitions}, a linear \SG{} module effectively models the loss function to be quadratic -- thus the lower network has a simpler optimisation task and makes faster learning progress. 

Obtaining similar errors on MNIST does not necessarily mean that trained models are the same or even similar. Since the use of synthetic gradients can alter learning dynamics and introduce new critical points, they might converge to different types of models. Assessing the representational similarity between different models is difficult, however.
%
%
%
One approach is to compute and visualise Representational Dissimilarity Matrices~\cite{kriegeskorte2008representational} for our data. We sample a subset of 400 points from MNIST, order them by label, and then record activations on each hidden layer when the network is presented with these points. 
We plot the pairwise correlation matrix for each layer, as shown in Figure \ref{fig:rdms}. This representation is permutation invariant, and thus the emergence of a block-diagonal correlation matrix means that at a given layer, points from the same class already have very correlated representations.

Under such visualisations one can notice qualitative differences between the representations developed under standard backpropagation training versus those delivered by a \SG{}-based model. 
In particular, in the MNIST model with 20 hidden layers trained with standard backpropagation we see that the representation covariance after 9 layers is nearly the same as the final layer's representation. However, by contrast, if we consider the same architecture but with a \SG{} module in the middle we see that the layers before the \SG{} module develop a qualitatively different style of representation. Note: this does \emph{not} mean that layers before \SG{} do not learn anything useful. To confirm this, we also introduced linear classifier probes \cite{alain2016understanding} and observed that, as with the pure backpropagation trained model, such probes can achieve 100\% training accuracy after the first two hidden-layers of the \SG{}-based model, as shown in Supplementary Material's Figure~\ref{fig:probes}.
%
With 20 \SG{} modules (one between every pair of layers), the representation is scattered even more broadly:  we see rather different learning dynamics, with each layer contributing a small amount to the final solution, and there is no longer a point in the progression of layers where the representation is more or less static in terms of correlation structure (see Figure~\ref{fig:rdms}).

Another way to investigate whether the trained models are qualitatively similar is to examine the norms of the weight matrices connecting consecutive hidden layers, and to assess whether the general shape of such norms are similar. While this does not definitively say anything about how much of the original classification is being solved in each hidden layer, it is a reasonable surrogate for how much computation is being performed in each layer\footnote{We train with a small L$_2$ penalty added to weights to make norm correspond roughly to amount of computation.}.
\begin{figure}[h]
\includegraphics[width=0.47\textwidth]{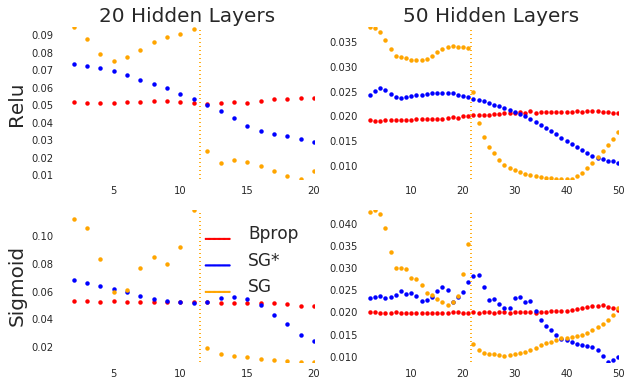}
\caption{Visualisation of normalised squared norms of linear transformations in each hidden layer of every model considered. Dotted orange line denotes level at which a single \SG{} is attached. \SG{}* has a \SG{} at every layer.}
\label{fig:mnistnorms}
\end{figure}
According to our experiments (see Figure~\ref{fig:mnistnorms} for visualisation of one of the runs), models trained with backpropagation on MNIST tend to have norms slowly increasing towards the output of the network (with some fluctuations and differences coming from activation functions, random initialisations, etc.). If we now put a \SG{} in between every two hidden layers, we get norms that start high, and then decrease towards the output of the network (with much more variance now). Finally, if we have a single \SG{} module we can observe that the behaviour after the \SG{} module resembles, at least to some degree, the distributions of norms obtained with backpropagation, while before the \SG{} it is more chaotic, with some similarities to the distribution of weights with \SG{}s in-between every two layers.

These observations match the results of the previous experiment and the qualitative differences observed. When synthetic gradients are used to deliver full unlocking we obtain a very basic model at the lowest layers and then see iterative corrections in deeper layers. For a one-point unlocked model with a single \SG{} module, we have two slightly separated models where one behaves similarly to backprop, and the other supports it. Finally, a fully locked model (\ie~traditional backprop) solves the task relatively early on, and later just increases its confidence.

We note that the results of this section support our previous notion that we are effectively dealing with a multi-agent system, which looks for coordination/equilibrium between components, rather than a single model which simply has some small noise injected into the gradients (and this is especially true for more complex models). 

\vspace{-.1in}
\section{\SG{} and conspiring networks}
\label{sec:fa}

\begin{table*}[htb]

\begin{tabular}{p{1.75cm}p{14.75cm}}
\toprule
Network & \multirow{2}{*}{ \includegraphics[width=0.875\textwidth]{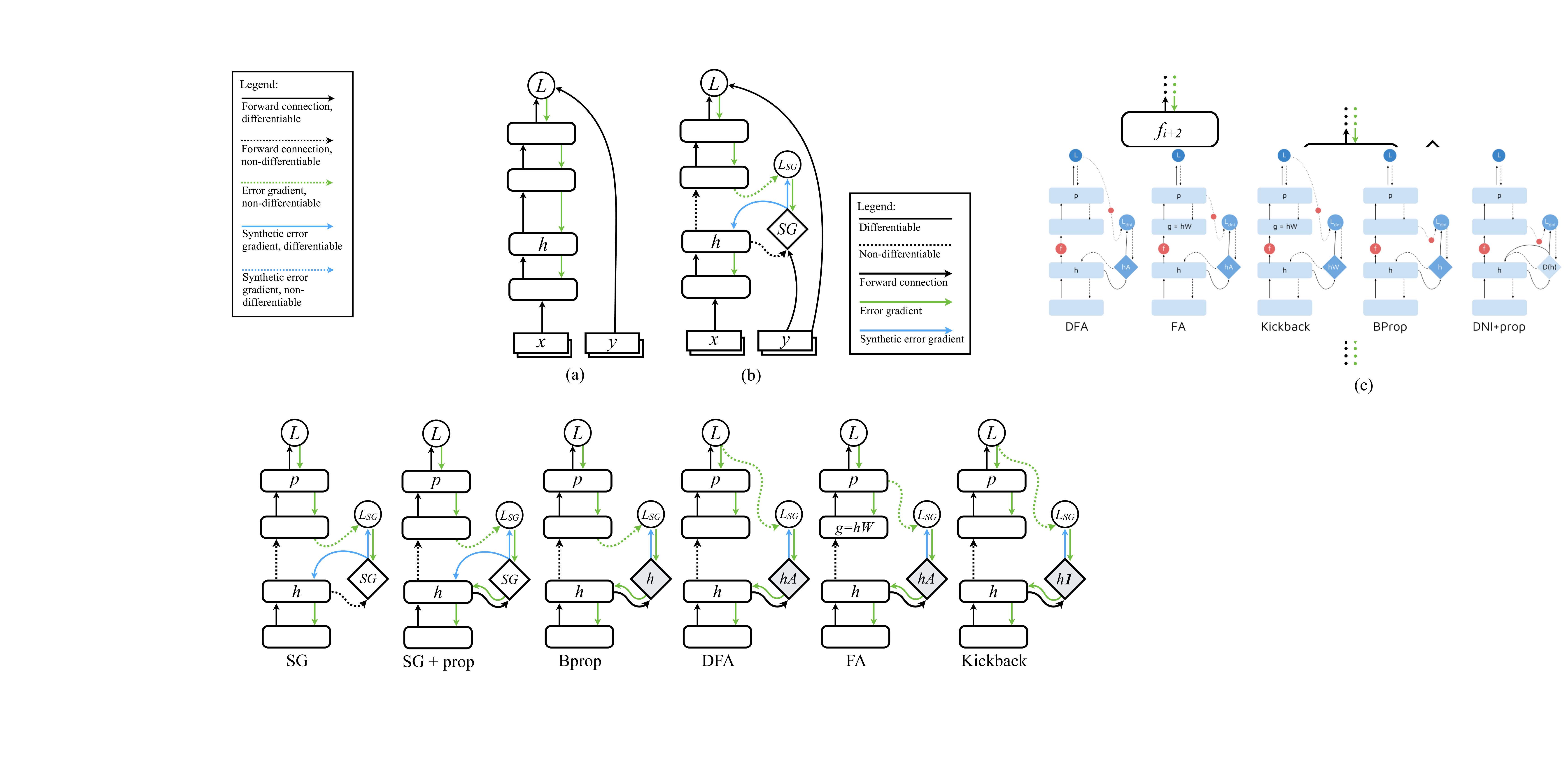} }\\[24ex]
Method
\end{tabular}

\begin{tabular}{lp{1.54cm}p{2.7cm}p{2cm}p{2cm}p{2cm}p{2cm}}
\midrule
$\widehat{\partial L / \partial h}$ & $\mathrm{SG}(h,y)$ & $\mathrm{SG}(h,y) + \alpha \tfrac{\partial L_{\mathrm{SG}}}{ \partial h} $ & $\partial L / \partial h$ & $(\partial L / \partial p)A^T$ & $(\partial L / \partial g)A^T$ & $(\partial L / \partial p)\boldsymbol{1}^T$\\
$\mathrm{SG}(h,y)$ & $\mathrm{SG}(h,y)$ & $\mathrm{SG}(h,y)$ & $h$ & $hA$ & $hA$ & $h\boldsymbol{1}$ \\
\SG{} trains& yes & yes & no & no & no & no \\
\SG{} target & $\partial L/\partial h$ & $\partial L/\partial h$ & $-\partial L/\partial h$ & $-\partial L/\partial p$ & $-\partial L/\partial g$ & $-\partial L/\partial p$ \\
$L_\mathrm{SG}(t, s)$ & $\|t-s\|^2$ & $\|t-s\|^2$ & $ - \langle t, s\rangle $  & $ - \langle t, s\rangle $  & $ - \langle t, s\rangle $ & $ - \langle t, s\rangle $ \\
\midrule
Update locked & no & yes* & yes & yes & yes & yes \\
Backw. locked & no & yes* & yes & no & yes & no \\
Direct error & no & no & no &yes & no & yes\\
\bottomrule
\end{tabular}
\caption{Unified view of ``conspiring'' gradients methods, including backpropagation, synthetic gradients are other error propagating methods. For each of them, one still trains with regular backpropagation (chain rule) however $\partial L / \partial h$ is substituted with a particular $\widehat{\partial L / \partial h}$. Black lines are forward signals, blue ones are synthetic gradients, and green ones are true gradients. Dotted lines represent non-differentiable operations. The grey modules are not trainable. $A$ is a fixed, random matrix and $\boldsymbol{1}$ is a matrix of ones of an appropriate dimension. * In SG+Prop the network is locked if there is a single \SG{} module, however if we have multiple ones, then propagating error signal only locks a module with the next one, not with the entire network. Direct error means that a model tries to solve classification problem directly at layer $h$.}
\label{tab:dfa}
\end{table*}


We now shift our attention and consider a unified view of several different learning principles that work by replacing true gradients with surrogates. We focus on three such approaches: Feedback Alignment (FA) ~\cite{lillicrap2016random}, Direct Feedback Alignment (DFA) ~\cite{NIPS2016_6441}, and Kickback (KB)~\cite{balduzzi2014kickback}. FA effectively uses a fixed random matrix during backpropagation, rather than the transpose of the weight matrix used in the forward pass. DFA does the same, except each layer directly uses the learning signal from the output layer rather than the subsequent local one. KB also pushes the output learning signal directly but through a predefined matrix instead of a random one.
%
%
By making appropriate choices for targets, losses, and model structure we can cast all of these methods in the \SG{} framework, and view them as comprising two networks with a \SG{} module in between them, wherein the first module builds a representation which makes the task of the \SG{} predictions easier.

We begin by noting that in the \SG{} models described thus far we do not backpropagate the \SG{} error back into the part of the main network preceding the \SG{} module (\ie~we assume $\partial L_\mathrm{SG} / \partial h = 0$).  However, if we relax this restriction, we can use this signal (perhaps with some scaling factor $\alpha$) and obtain what we will refer to as a $\SG{} + \mathrm{prop}$ model. 
Intuitively, this additional learning signal adds capacity to our \SG{} model and forces both the main network and the $\SG{}$ module to ``conspire'' towards a common goal of making better gradient predictions.
%
From a practical perspective, according to our experiments, this additional signal heavily stabilises learning system%
\footnote{ 
In fact, ignoring the gradients \emph{predicted} by \SG{} and only using the  derivative of the \SG{} loss, i.e. $\partial L_\mathrm{SG} / \partial h$, still provides enough learning signal to converge to a solution for the original task in the simple classification problems we considered.
%
%
We posit a simple rationale for this: if one can predict gradients well using a simple transformation of network activations (\eg~a linear mapping), this suggests that the loss itself can be predicted well too, and thus (implicitly) so can the correct outputs. 
%
}.
However, this comes at the cost of no longer being unlocked.

Our main observation in this section is that FA, DFA, and KB can be expressed in the language of ``conspiring'' networks (see Table~\ref{tab:dfa}), of two-network systems that use a \SG{} module.
The only difference between these approaches is how one parametrises \SG{} and what target we attempt to fit it to. 
This comes directly from the construction of these systems, and the fact that if we treat our targets as constants (as we do in \SG{} methods), then the backpropagated error from each \SG{} module ($\partial L_\mathrm{SG}/\partial h$) matches the prescribed update rule of each of these methods ($\widehat{\partial L / \partial h}$).
One direct result from this perspective is the fact that Kickback is essentially DFA with $A=\boldsymbol{1}$. 
%
%
%
For completeness, we note that regular backpropagation can also be expressed in this unified view --  to do so, we construct a \SG{} module such that the gradients it produces attempt to align the layer activations with the negation of the true learning signal ($-\partial L / \partial h$). 
%
%
%
In addition to unifying several different approaches, our mapping also illustrates the potential utility and diversity in the generic idea of predicting gradients.

\vspace{-.1in}
\section{Conclusions}

This paper has presented new theory and analysis for the behaviour of synthetic gradients in feed forward models.
Firstly, we showed that introducing \SG{} does not necessarily change the critical points of the original problem, however at the same time it can introduce new critical points into the learning process. This is an important result showing that \SG{} does not act like a typical regulariser despite simplifying the error signals.
%
%
Secondly, we showed that (despite modifying learning dynamics) \SG{}-based models converge to analogous solutions to the true model under some additional assumptions. We proved exact convergence for a simple class of models, and for more complex situations we demonstrated that the implicit loss model captures the characteristics of the true loss surface. It remains an open question how to characterise the learning dynamics in more general cases.
%
%
%
Thirdly, we showed that despite these convergence properties the trained networks can be qualitatively different from the ones trained with backpropagation.
While not necessarily a drawback, this is an important consequence one should be aware of when using synthetic gradients in practice.
%
%
Finally, we provided a unified framework that can be used to describe alternative learning methods such as Synthetic Gradients, FA, DFA, and Kickback, as well as standard Backprop. The approach taken shows that the language of \emph{predicting gradients}
is suprisingly universal and provides additional intuitions and insights into the models.

\section*{Acknowledgments}
The authors would like to thank James Martens and Ross Goroshin for 
their valuable remarks and discussions.

\bibliographystyle{icml2017}
\bibliography{example_paper}

\begin{appendices}
\section*{Supplementary Materials}

\section{Additional examples}
\label{SM:examples}

\subsection*{Critical points}

We can show an example of \SG{} introducing new critical points. Consider a small one-dimensional training dataset $\{-2, -1, 1, 2\} \subset \mathbb{R}$, and let us consider a simple system where the model $f : \mathbb{R} \rightarrow \mathbb{R}$ is parametrised with two scalars, $a$ and $b$ and produces $ax+b$. 
We train it to minimise $L(a,b) = \sum_{i=1}^4 |ax_i+b|$.
This has a unique minimum which is obtained for $a=b=0$, and standard gradient based methods will converge to this solution. 
Let us now attach a \SG{} module between$f$ and $L$.
This module produces a (trainable) scalar value $c \in \mathbb{R}$ (thus it produces a single number, independent from the input). 
%
%
Regardless of the value of $a$, we have a critical point of the \SG{} module when $b=0$ and $c=0$. However, solutions with $a=1$ and $c=0$ are clearly not critical points of the original system.
%
Figure~\ref{fig:rep} shows the loss surface and the fitting of \SG{} module when it introduces new critical point.
\begin{figure}[htb]
\includegraphics[width=.23\textwidth]{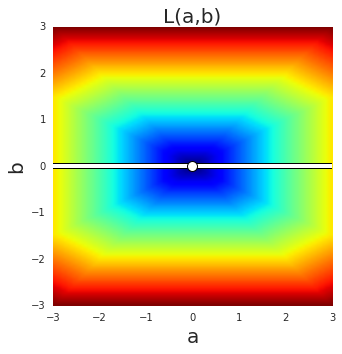}
\includegraphics[width=.23\textwidth]{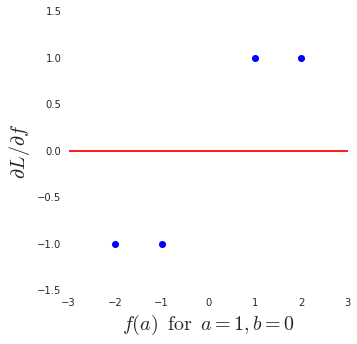}
\caption{\emph{Left:} The loss surface with a white marker represents critical point of the original optimisation and white line a set of critical points of \SG{} based one. \emph{Right:} A situation when \SG{} finds a solution $d=0$ which introduces new critical point, which is not a critical point of the original problem.}
\label{fig:rep}
\end{figure}

\section{Proofs}
\label{SM:proofs}

\textbf{Theorem 1}\emph{
Let us consider linear regression trained with a linear \SG{} module attached between its output and the loss. If one chooses the learning rate of the \SG{} module using line search, then in every iteration there exists small enough, positive learning rate of the main network such that it converges to the global solution.
}

\newcommand{\ekp}[1]{\ee_{#1}^{\parallel}}
\newcommand{\BB}{\mathbf{B}}

\begin{proof}
Let $\X = \{\x^s\}_{s=1}^{S} \in \RR^{d \times S}$ be the data, let $\{y_s\}_{s=1}^{S} \in \RR^{1\times S}$ be the labels. Throughout the proof $k$ will be the iteration of training.\\
%
We denote by $\II \in \RR^{1\times S}$ a row vector in which every element is $1$.
We also follow the standard convention of including the bias in the weight matrix by augmenting the data $\X$ with one extra coordinate always equal to $1$. Thus, we denote $\bX = (\X^T| \II^T)^T$, $\bX \in \RR^{(d+1)\times S}$ and $\bx^s$-the columns of $\bX$.
Using that convention, the weight matrix is $\W_k \in \RR^{1 \times (d+1)}$. We have
\[
p^s_k := \W_k \bx^s ,
\]
\[
L = \frac{1}{2} \sum\limits_{s=1}^{S} \left(y^s - p_k^s \right)^2 = \frac{1}{2} \sum\limits_{i=1}^{n} \left(y^s - \W_k \bx^s\right)^2.
\]
Our aim is to find
\[
\arg\min\limits_{\W,b} L.
\]
We use
\[
\frac{\partial L}{\partial \W} = \frac{\partial L}{\partial \p}\frac{\partial \p}{\partial \W} = 
\sum\limits_{s=1}^{S} \frac{\partial L}{\partial p^s}\frac{\partial p^s}{\partial \W}=
\]
\[
\sum\limits_{s=1}^{S} \frac{\partial L}{\partial p^s}\bx^s=
\sum\limits_{s=1}^{S}  \left(y^s - \W_k \bx^s \right) (\bx^s)^T
\]

\[
\frac{\partial L}{\partial \p} = \left(p^1 - y^1, \ldots, p^S - y^S \right)  
\]
We will use the following parametrization of the synthetic gradient $\tDL_k = (\alpha_k + 1) \p_k - (\beta_k + 1) \y + \gamma_k \II$. The reason for using this form instead of simply $a_k  \p_k + b_k \y + c_k \II$ is that we are going to show that under DNI this synthetic gradient will converge to the ``real gradient'' $\frac{\partial L}{\partial \p}$, which means showing that $\lim\limits_{k\rightarrow \infty} (\alpha_k, \beta_k, \gamma_k) = (0,0,0)$. Thanks to this choice of parameters $\alpha_k$, $\beta_k$, $\gamma_k$ we have the simple expression for the error
\[
E_k = \left\| \tDL_k - \frac{\partial L}{\partial \p} \right\|_2^2 = 
\]
\[
\left\| (\alpha_k + 1) \p_k - (\beta_k + 1) \y + \gamma_k \II - \right.
\]
\[
\left. \left(p_k^1 - y^1, \ldots, p_k^S - y^S \right)  \right\|_2^2 =
\]
\[
\left\|\left(\alpha_k p_k^1 - \beta_k y^1  + \gamma_k, \ldots, 
\alpha_k p_k^S - \beta_k y^S  + \gamma_k \right)  \right\|_2^2
\]
Parameters $\alpha_k$, $\beta_k$, $\gamma_k$ will be updated using the gradient descent minimizing the error $E$. We have
\[
\frac{\partial E}{\partial \alpha} = \sum\limits_{s=1}^S ( \alpha_k p_k^s - \beta_k y^s  + \gamma_k) p_k^s
\]
\[
\frac{\partial E}{\partial \beta} = - \sum\limits_{s=1}^S ( \alpha_k p_k^s - \beta_k y^s  + \gamma_k) y^s
\]
\[
\frac{\partial E}{\partial \gamma} = \sum\limits_{s=1}^S ( \alpha_k p_k^s - \beta_k y^s  + \gamma_k).
\]
 As prescribed in \citet{DNI}, we start our iterative procedure from the synthetic gradient being equal to zero and we update the parameters by adding the (negative) gradient multiplied by a learning rate $\nu$. This means that we apply the iterative procedure:
\[
\alpha_0 = -1, \;\; \beta_0 = -1, \;\; \gamma_0 = 0  
\]
\begin{equation*} \begin{aligned}
\W_{k+1} =& \W_k - \mu \sum\limits_{s=1}^S \left( (\alpha_k + 1) \p_k^s - \right.\\
& \left. (\beta_k + 1) \y^s +  \gamma_k\right) (\bx^s)^T\\
\alpha_{k+1} =& \alpha_k - \nu \sum\limits_{s=1}^S ( \alpha_k p_k^s - \beta_k y^s  + \gamma_k) p_k^s\\
\beta_{k+1} =& \beta_k + \nu \sum\limits_{s=1}^S ( \alpha_k p_k^s - \beta_k y^s  + \gamma_k) y^s\\
\gamma_{k+1} =& \gamma_k - \nu \sum\limits_{s=1}^S ( \alpha_k p_k^s - \beta_k y^s  + \gamma_k).
\end{aligned} 
\end{equation*} 
Using matrix notation
\begin{equation*} \begin{aligned}
\W_{k+1} &= \W_k - \mu ((\alpha_k + 1) \p_k - (\beta_k + 1) \y +  \gamma_k \II) \bX^T\\
\alpha_{k+1} &= \alpha_k - \nu\left( \alpha_k \| \p_k \|_2^2 - \beta_k \langle \y, \p_k \rangle  + \gamma_k \langle \II, \p_k \rangle \right)\\
\beta_{k+1} &= \beta_k + \nu \left( \alpha_k \langle \p_k, \y \rangle - \beta_k \| \y \|_2^2   + \gamma_k \langle\II,  \y \rangle \right)\\
\gamma_{k+1} &= \gamma_k - \nu \left( \alpha_k \langle \II, \p_k \rangle - \beta_k \langle \II, \y \rangle  + S \gamma_k \right)
\end{aligned} \end{equation*} 
Note, that the subspace given by $\alpha = \beta = \gamma = 0$ is invariant under this mapping. As noted before, this corresponds to the synthetic gradient being equal to the real gradient. Proving the convergence of \SG{} means showing, that a trajectory starting from $\alpha_0 = -1$, $\beta_0 = -1$, $\gamma_0 = 0$ converges to  $\W = \W_0$, $\alpha = \beta = \gamma = 0$, where $\W_0$ are the ``true'' weigts of the linear regression. We are actually going to prove more, we will show that $\W = \W_0$, $\alpha = \beta = \gamma = 0$ is in fact a global attractor, i.e. that any trajectory converges to that point.
Denoting $\omega = (\alpha, \beta, \gamma)^t$ we get
\begin{equation*} \begin{aligned}
\W_{k+1} &= \W_k - \mu ((\alpha_k + 1) \p_k - (\beta_k + 1) \y +  \gamma_k \II) \bX^T\\
\omega_{k+1} &= \omega_k - \nu\left[ \p_k^T | -\y^T | \II^T \right]^T\left[ \p_k^T | -\y^T | \II^T \right] \omega_k
\end{aligned} \end{equation*} 
\begin{equation*} \begin{aligned}
\W_{k+1} &= \W_k - \mu (\p_k - \y) \bX^T - \mu \omega_k^T \left[ \p_k^T | -\y^T | \II^T \right]^T \bX^T\\
\omega_{k+1} &= \omega_k - \nu\left[ \p_k^T | -\y^T | \II^T \right]^T\left[ \p_k^T | -\y^T | \II^T \right] \omega_k.
\end{aligned} \end{equation*} 
Denoting by $\A_k = \left[ \p_k^T | -\y^T | \II^T \right]$ we get
\begin{equation*} \begin{aligned}
\W_{k+1} &= \W_k - \mu (\p_k - \y) \bX^T - \mu \omega^T \A_k^T \bX^T\\
\omega_{k+1} &= \omega_k - \nu \A_k^T \A_k \omega_k.
\end{aligned} \end{equation*}
Multiplying both sides of the first equation by $\bX$ we obtain
\begin{equation*} \begin{aligned}
\W_{k+1} \bX  &= \W_k \bX  - \mu (\p_k - \y) \bX^T \bX - \mu \omega^T \A_k^T \bX^T \bX \\
\omega_{k+1} &= \omega_k - \nu \A_k^T \A_k \omega_k.
\end{aligned} \end{equation*} 
%
Denote $\B = \bX^T \bX$. We get
\begin{eqnarray*}
\p_{k+1} &=& \p_k - \mu \p_k \BB + \mu \y \BB - \mu  \omega_k^T \A_k^T \BB\\
\omega_{k+1} &=& \omega_k - \nu \A_k^T \A_k \omega_k.
\end{eqnarray*}
Denoting $\ee_k = (\y - \p_k)^T $ we get
\begin{eqnarray*}
\ee_{k+1} &=& \ee_k - \mu  \BB \ee_k + \mu  \BB \A_k \omega_k \\
\omega_{k+1} &=& \omega_k - \nu \A_k^T \A_k \omega_k.
\end{eqnarray*}
We will use the symbol $\xi = \A_k \omega_k$. Then 
\begin{equation}
\label{eqn::main_dynamical_system}
\begin{array}{lll}
\ee_{k+1} &=& \ee_k - \mu \BB \ee_k + \mu \BB \xi_k\\
\xi_{k+1} &=& \xi_k - \nu \A_k \A_k^T \xi_k.
\end{array}
\end{equation}
Every vector $v$ can be uniquely expressed as a sum $v = v^{\bot} + v^{\parallel}$ with $\bX v^{\bot} = \mathbf{0}$ and $v^{\parallel} = \bX^T \theta$ for some $\theta$ ($v^{\parallel}$ is a projection of $v$ onto the linear subspace spanned by the columns of $\bX$). Applying this decomposition to $\ee_k = \ee_k^{\bot} + \ee_k^{\parallel}$ we get
\begin{eqnarray*}
\ee_{k+1}^{\bot} &=& \ee_k^{\bot} - \mu (\B \ee_k)^{\bot} + \mu (\B \xi_k)^{\bot} \\
\ekp{k+1} &=& \ekp{k} - \mu (\B \ee_k)^{\parallel}  + \mu (\B \xi_k)^{\parallel}\\
\xi_{k+1} &=& \xi_k - \nu \A_k \A_k^T \xi_k.
\end{eqnarray*}
Note now, that as $\B = \bX^T \bX$, for any vector $v$ there is $(\B v)^{\bot} = \mathbf{0}$, and $(\B v)^{\parallel} = \B v$  (because the operator $v \mapsto v^{\parallel}$ is a projection). Moreover,  $\B v = \B v^{\parallel}$. Therefore
\begin{eqnarray*}
\ee_{k+1}^{\bot} &=& \ee_k^{\bot}\\
\ekp{k+1} &=& \ekp{k} - \mu (\B \ekp{k})  + \mu (\B \xi_k)^{\parallel}\\
\xi_{k+1} &=& \xi_k - \nu \A_k \A_k^T \xi_k.
\end{eqnarray*}
The value $\ee_k^{\bot}$ does not change. Thus, we will be omitting the first equation. Note, that $\ee_k^{\bot}$ is  ``the residue'', the smallest error that can be obtained by a linear regression.\\
For the sake of visual appeal we will denote $\f = \ekp{k}$
\begin{eqnarray*}
\f_{k+1} &=& \f_{k} - \mu \B \f_{k}  + \mu \B \xi_k\\
\xi_{k+1} &=& \xi_k - \nu \A_k \A_k^T \xi_k.
\end{eqnarray*}
Taking norms and using $\| u + v \| \le \|u\| + \|v\|$ we obtain
\begin{eqnarray*}
\|\f_{k+1}\|_2 &\le& \| \f_{k} - \mu \B \f_{k} \|_2  +  \mu \| \B \xi_{k} \|_2\\
\| \xi_{k+1} \|_2^2 &=& \| \xi_k \|_2^2 - 2 \nu \| \A_k^T \xi_k \|_2^2 + \nu^2 \| \A_k \A_k^T \xi_k \|_2^2.
\end{eqnarray*}
Observe that $\| \f_{k} - \mu \B \f_{k} \|_2^2 = \| \f_{k} \|_2^2 - 2 \mu \f_{k} \B \f_k + \mu^2 \| \B \f_k \|^2_2$. As $\B$ is a constant matrix, there exists a constant $b>0$ such that $ v^T \B v \ge b \|v\|_2^2$ for any $v$ satisfying $v^{\parallel} = v$. Therefore $\| \f_{k} - \mu \B \f_{k} \|_2^2 \le \| \f_{k} \|_2^2 - 2 \mu b \| \f_k \|_2^2 + \mu^2 \| \B \|^2 \| \f_k \|_2^2$. Using that and $\| \B \xi_{k} \|_2 \le \| \B\| \| \xi_{k} \|_2$ we get
\begin{eqnarray*}
\|\f_{k+1}\|_2 &\le& \sqrt{1 - 2 \mu b + \mu^2 \| \B \|^2 } \|\f_{k}\|_2  +  \mu \| \B \| \| \xi_{k} \|_2\\
\| \xi_{k+1} \|_2^2 &=& \| \xi_k \|_2^2 - 2 \nu \| \A_k^T \xi_k \|_2^2 + \nu^2 \| \A_k \A_k^T \xi_k \|_2^2.
\end{eqnarray*}
Let us assume that $\A_k \A_k^T \xi_k \neq 0$. In that case the right-hand side of the second equation is a quadratic function is $\nu$, whose minimum value is attained for $\nu =  \frac{\| \A_k^T \xi_k \|_2^2 }{\| \A_k \A_k^T \xi_k \|_2^2 }$. For so-chosen $\nu$ we have
\begin{eqnarray*}
\|\f_{k+1}\|_2 &\le& \sqrt{1 - 2 \mu b + \mu^2 \| \B \|^2 } \|f_{k}\|_2  +  \mu \| \B \| \| \xi_{k} \|_2\\
\| \xi_{k+1} \|_2^2 &=&  \left(1 -
\frac{ \| \A_k^T \xi_{k} \|_2^2}{ \| \A_k \A_k^T \xi_{k} \|_2^2} 
\frac{ \| \A_k^T \xi_{k} \|_2^2}{ \| \xi_{k} \|_2^2}\right) \| \xi_k \|_2^2.
\end{eqnarray*}
Consider a space $\{\f\} \oplus \{\xi\}$ (concatenation of vectors) with a norm $\| \{\f\} \oplus \{\xi\} \|_{\oplus} = \|\f\|_2 + \|\xi\|_2$. 
\[
\|\{\f_{k+1}\} \oplus \{\xi_{k+1}\}\|_{\oplus} \le
\]
\[
\sqrt{1 - 2 \mu b + \mu^2 \| \B \|^2 } \|f_{k}\|_2  +  \mu \| \B \| \| \xi_{k} \|_2 + \\
\]
\[\sqrt{ 1 -
\frac{ \| \A_k^T \xi_{k} \|_2^2}{\| \A_k \A_k^T \xi_{k} \|_2^2} 
\frac{\| \A_k^T \xi_{k} \|_2^2}{ \| \xi_{k} \|_2^2} }  \| \xi_k \|_2 \le \\
\]
Using $\sqrt{1-h} \le 1-\frac{1}{2}h$ we get
\[
\|\{\f_{k+1}\} \oplus \{\xi_{k+1}\}\|_{\oplus} \le \sqrt{1 - 2 \mu b + \mu^2 \| \B \|^2 } \|f_{k}\|_2  +
\]
\[
\left( 1 -
\frac{ \| \A_k^T \xi_{k} \|_2^2}{2 \| \A_k \A_k^T \xi_{k} \|_2^2} 
\frac{\| \A_k^T \xi_{k} \|_2^2}{ \| \xi_{k} \|_2^2} +  \mu \right) \| \xi_k \|_2\\
\]
Note, that $\sqrt{1 - 2 \mu b + \mu^2 \| \B \|^2 } < 1$ for $0 < \mu \le \frac{b}{\|\B\|^2}$. Thus, for
\[
\mu < \min\left\{\frac{b}{\|\B\|^2}, 1 -
\frac{ \| \A_k^T \xi_{k} \|_2^2}{2 \| \A_k \A_k^T \xi_{k} \|_2^2} 
\frac{\| \A_k^T \xi_{k} \|_2^2}{ \| \xi_{k} \|_2^2} \right\},
\]
for every pair $\{\f_{k+1}\} \oplus \{\xi_{k+1}\} \neq \{0\} \oplus \{0\}$ (and if they are zeros then we already converged) there is
\[
\|\{\f_{k+1}\} \oplus \{\xi_{k+1}\}\|_{\oplus} < \|\{\f_{k}\} \oplus \{\xi_{k}\}\|_{\oplus}.
\]
Therefore, by Theorem~\ref{thm::tw_ze_szlenka}, the error pair $\{\f_{k+1}\} \oplus \{\xi_{k+1}\}$ has to converge to $\mathbf{0}$, which ends the proof in the case $\A_k \A_k^T \xi_k \neq 0$. It remains to investigate what happens if $\A_k \A_k^T \xi_k = 0$.

We start by observing that either $\xi_k = 0$ or $\A_k^T \xi_k \neq 0$ and $\A_k \A_k^T \xi_k \neq 0$. This follows directly from the definition $\xi_k = \A_k \omega_k$. Indeed, if $\xi_k \neq 0$ there is $0 < \| \A_k \omega_k \|_2^2 = \omega_k^T \A_k^T \xi_k$ and analogously $0 < \| \A_k^T \xi_k \| = \xi_k^T \A_k \A_k^T \xi_k$.

In case $\xi_k = 0$ there is  $\|\{\f_{k+1}\} \oplus \{\xi_{k+1}\}\|_{\oplus} =  \|\ f_{k+1} \|_2 < 
\sqrt{1 - 2 \mu b + \mu^2 \| \B \|^2 } \| f_{k} \|_2 =$ $ \sqrt{1 - 2 \mu b + \mu^2 \| \B \|^2 } \|\{\f_{k}\} \oplus \{\xi_{k}\}\|_{\oplus} $
and the theorem follows. 
\end{proof}

\begin{thm}
\label{thm::tw_ze_szlenka}
Let $B$ be a finite-dimensional Banach space. Let $f:B \rightarrow B$ be a continuous map such that for every $\x \in B$ there is $\|f(x)\| < \|x\|$. Then for every $x$ there is $\lim\limits_{n \rightarrow \infty} f^n(x) = 0$.
\end{thm}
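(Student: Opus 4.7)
The plan is to combine the strict decrease of the norm along iterates with a compactness argument that uses finite-dimensionality crucially. Interpreting the hypothesis sensibly at the origin (the stated inequality is vacuous/impossible at $x=0$, so one reads it as $\|f(x)\| \le \|x\|$ with strict inequality whenever $x \neq 0$, which also forces $f(0)=0$), the real scalar sequence $a_n := \|f^n(x)\|$ is nonincreasing, strictly decreasing as long as the iterate is nonzero, and bounded below by $0$, hence converges to some $L \ge 0$. The whole task reduces to ruling out $L > 0$.

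First I would suppose for contradiction that $L > 0$. The orbit $\{f^n(x)\}_{n \ge 0}$ then sits inside the closed ball $\{y \in B : \|y\| \le \|x\|\}$, which is compact because $B$ is finite-dimensional. Extract a convergent subsequence $f^{n_k}(x) \to y^\star$; by continuity of the norm, $\|y^\star\| = L > 0$, so in particular $y^\star \neq 0$, and the strict contraction hypothesis applies at $y^\star$.

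Now apply continuity of $f$: the subsequence $f^{n_k+1}(x) = f(f^{n_k}(x))$ converges to $f(y^\star)$. But $\bigl(\|f^{n_k+1}(x)\|\bigr)_k$ is itself a subsequence of the monotone sequence $(a_n)$ and therefore tends to the same limit $L$, so $\|f(y^\star)\| = L = \|y^\star\|$. This contradicts $\|f(y^\star)\| < \|y^\star\|$. Hence $L = 0$, and since the full (monotone) sequence $\|f^n(x)\|$ converges to $0$, we conclude $f^n(x) \to 0$ in norm.

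The main (rather mild) obstacle is bookkeeping around the degenerate case where some iterate equals $0$; there the tail is identically zero and the conclusion is trivial, so without loss of generality one runs the argument under the standing assumption that every $f^n(x)$ is nonzero. The only step that genuinely uses finite-dimensionality is extracting a convergent subsequence from a norm-bounded set; in infinite dimensions the same statement fails in general without an additional uniformity assumption on the contraction, which is why the theorem is explicitly restricted to finite-dimensional $B$.
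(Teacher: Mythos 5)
Your proof is correct and follows essentially the same route as the paper's: compactness of the norm-bounded orbit in finite dimensions yields a subsequential limit point, at which the strict norm decrease (combined with continuity of $f$ and monotonicity of $\|f^n(x)\|$) forces the limit of the norms to be $0$. The paper packages the same argument via the $\omega$-limit set and a Weierstrass argument on $\inf_{y\in\omega(x)}\|y\|$, while your version makes the continuity step explicit through the shifted subsequence $f^{n_k+1}(x)$; your observation that the hypothesis must be read as $\|f(x)\|\le\|x\|$ with strictness only off the origin is a fair, minor correction to the statement as written.
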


\begin{proof}
Let $\omega(x) = \{y : \exists_{i_1 < i_2 < \ldots} \lim\limits_{n \rightarrow \infty} f^{i_n}(x) = y \}$. Because $\|f(x)\| < \|x\|$, the sequence $x, f(x), f^2(x), \ldots$ is contained in a ball of a radius $\|x\|$, which due to a finite dimensionality of $B$ is a compact set. Thus, $\omega(x)$ is nonempty. Moreover, from the definition, $\omega(x)$ is a closed set, and therefore it is a compact set. Let $y_0 = \inf_{y \in \omega(x)} \|y\|$ -- which we know exists, due to the compactness of $\omega(x)$ and the continuity of $\|\cdot\|$ (Weierstra{\ss} theorem). But for every $y \in \omega(x)$ there is $f(y) \in \omega(x)$, thus there must be $y_0 = 0$. By definition, for every $\varepsilon$, there exists $n_0$ such that $\|f^{n_0}(x)\| < \varepsilon$. Therefore, for $n > n_0$ $\|f^n(x)\| < \varepsilon$. Therefore, $f^n(x)$ must converge to $0$.
\end{proof}

\textbf{Proposition 2.} 
\emph{
Let us assume that a \SG{} module is trained in each iteration in such a way that it $\epsilon$-tracks true gradient, i.e.  that $\|\mathrm{\SG{}}(h, y) - \partial L/\partial h\| \leq \epsilon$. If $\|\partial h / \partial \theta_{<h}\|$ is upper bounded by some $K$ and there exists a constant $\delta \in (0,1)$ such that in every iteration $\epsilon K \leq \|\partial L/\partial \theta_{<h}\| \tfrac{1-\delta}{1+\delta}$, then the whole training process converges to the solution of the original problem.
}
\begin{proof}
Directly from construction we get that $\|\partial L / \partial \theta_{<h} - \hat \partial L/\hat \partial \theta_{<h}\| = \|(\partial L / \partial h - \mathrm{\SG{}}(h, y) ) \partial h / \partial \theta_{<h}\| \leq \epsilon K$ thus in each iteration there exists such a vector $e$, that $\|e\| \leq \epsilon K$ and $\hat \partial L/\hat \partial \theta_{<h}=\partial L / \partial \theta_{<h} + e$. Consequently, we get a model trained with noisy gradients, where the noise of the gradient is bounded in norm by $\epsilon K$ so, directly from assumptions, it is also upper bounded by $\|\partial L/\partial \theta_{<h}\| \tfrac{1-\delta}{1+\delta}$ and we we get that the direction followed is sufficient for convergence as this means that cosine between true gradient and synthetic gradient is uniformly bounded away (by $\delta$) from zero~\cite{zoutendijk1970nonlinear, gratton2011much}. At the same time, due to Proposition 1, we know that the assumptions do not form an empty set as the \SG{} module can stay in an $\epsilon$ neighborhood of the gradient, and both norm of the synthetic gradient and $\|\partial h / \partial \theta_{<h}\|$ can go to zero around the true critical point.
\end{proof}

\textbf{Corollary 1.} 
\emph{
For a deep linear model and an MSE objective, trained with a linear \SG{} module attached between two of its hidden layers, there exist learning rates in each iteration such that it converges to the critical point of the original problem.
}
\begin{proof}
Denote the learning rate of the main model by $\mu$ and learning rate of the \SG{} module by $\nu>0$ and put $\mu = \epsilon  \max(0, \|e\| - 1/(3 \|\partial h / \partial \theta_{<h}\|) \| \partial L / \partial \theta_{<h} \|)$, where $\epsilon$ is a small learning rate (for example found using line search) and $e$ is the error \SG{} will make in the next iteration. The constant $1/3$ appears here as it is equal to $(1-\delta)/(1+\delta)$ for $\delta=0.5$ which is a constant from Proposition 2, which we will need later on. Norm of $e$ consists of the error fitting term $L_{\mathrm{\SG{}}}$ which we know, and the term depending on the previous $\mu$ value, since this is how much the solution for the \SG{} problem evolved over last iteration. In such a setting, the main model changes iff 
\begin{equation}
\label{eq:ineqdeep}
\|e\| \| \partial h / \partial \theta_{<h}\| < 1/3 \| \partial L / \partial \theta_{<h}\|.
\end{equation}
First of all, this takes place as long as $\nu$ is small enough since the linear \SG{} is enough to represent $\partial L / \partial h$ with arbitrary precision (Proposition 1) and it is trained to do so in a way that always converges (as it is a linear regression fitted to a linear function). So in the worst case scenario for a few first iterations we choose very small $\mu$ (it always exists since in the worst case scenario $\mu=0$ agrees with the inequality).  Furthermore, once this happens we follow true gradient on $\theta_{>h}$ and a noisy gradient on $\theta_{<h}$. Since the noise is equal to $e \partial h / \partial \theta_{<h}$ we get that
$$
\|e \partial h / \partial \theta_{<h}\| \leq \|e\|\|\partial h / \partial \theta_{<h}\| <  1/3 \| \partial L / \partial \theta_{<h}\|,
$$
 which is equivalent to error for $\theta_{<h}$ being upper bounded by $(1-\delta)/(1+\delta) \| \partial L / \partial h\|$ for $\delta=0.5$ which matches assumptions of Proposition 2, thus leading to the convergence of the model considered. If at any moment we lose track of the gradient again -- the same mechanism kicks in - $\mu$ goes down for as long as the inequality~\eqref{eq:ineqdeep} does not hold again (and it has to at some point, given $\nu$ is positive and small enough).
\end{proof}

\begin{figure*}[htb]
\includegraphics[width=\textwidth]{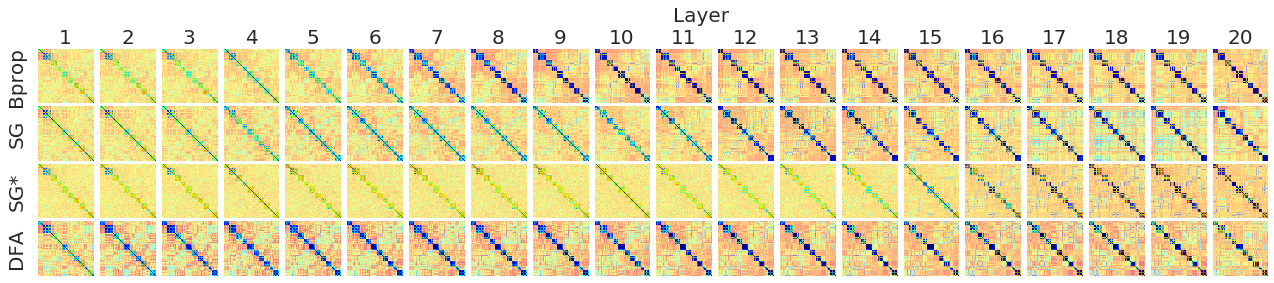}
\includegraphics[width=\textwidth]{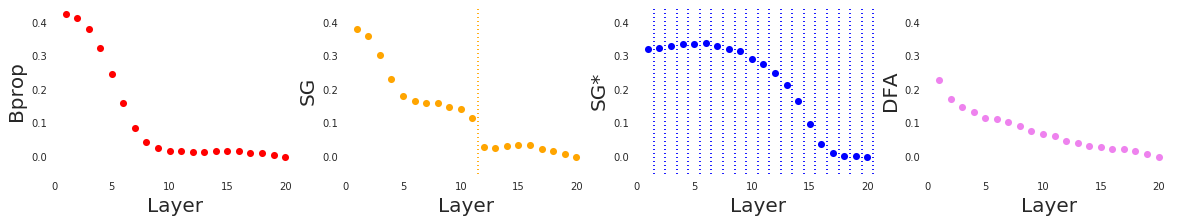}
\caption{(top) Representation Dissimilarity Matrices for a label ordered sample from MNIST dataset pushed through 20-hidden layer deep relu networks trained with backpropagation (top row), a single \SG{} attached between layers 11 and 12 (2nd row), \SG{} between every pair of layers (3rd row), and the DFA model (4th row). Notice the moment of appearance of dark blue squares on a diagonal in each learning method, which shows when a clear inner-class representation has been learned. For visual confidence off block diagonal elements are semi transparent. (bottom) L2 distance between diagonal elements at a given layer and the same elements at layer 20. Dotted lines show where \SG{}s are inserted. With a single \SG{} module we can see that there is the representation is qualitatively different for the first part of the network (up to layer 11) and the rest. For fully unlocked model the representation constantly evolves through all the layers, as opposed to backprop which has a nearly constant representation correlation from layer 9 forward. Also due to DFA mathematical formulation it tries to solve the task as early as possible thus leading to nearly non-evolving representation correlation after the very first layer.}
\label{fig:dfardm}
\end{figure*}

\section{Technical details}
\label{SM:experiments}

All experiments were performed using TensorFlow~\cite{abadi2016tensorflow}. In all the experiments \SG{} loss is the MSE between synthetic and true gradients. Since all \SG{}s considered were linear, weights were initialized to zeros so initially \SG{} produces zero gradients, and it does not affect convergence (since linear regression is convex).

\subsection*{Datasets}

Each of the artificial datasets is a classification problem, consisting of $\X$ sampled from $k$-dimensional Gaussian distribution with zero mean and unit standard deviation. For $k=2$ we sample 100 points and for $k=100$ we sample 1000. Labels $\y$ are generated in a way depending on the dataset name:
\begin{itemize}
\item linear$k$ - we randomly sample an origin-crossing hyperplane (by sampling its parameters from standard Gaussians) and label points accordingly,
\item noisy$k$ - we label points according to linear$k$ and then randomly swap labels of 10\% of samples,
\item random$k$ - points are labeled completely randomly.
\end{itemize}

We used one-hot encoding of binary labels to retain compatibility with softmax-based models, which is consistent with the rest of experiments. However we also tested the same things with a single output neuron and regular sigmoid-based network and obtained analogous results.

\subsection*{Optimisation}

Optimisation is performed using the Adam optimiser~\cite{kingma2014adam} with a learning rate of $3e-5$. This applies to both main model and to \SG{} module.

\subsection*{Artificial datasets}

\begin{table}[htb]
\centering
\begin{tabular}{lllrr}
\toprule
  dataset  & model & MSE & log loss \\
\midrule
  linear2  & shallow & 0.00000 & \textbf{0.03842} \\
  linear100  & shallow & 0.00002 & \textbf{0.08554}\\
  noisy2  & shallow & 0.00000 & \textbf{0.00036} \\
  noisy100  & shallow & 0.00002 & \textbf{0.00442}\\
  random2  & shallow & 0.00000 & 0.00000 \\
  random100  & shallow & 0.00004 & 0.00003  \\

  noisy2  & deep & 0.00000 & 0.00000 \\
  noisy100  & deep & 0.00001 & \textbf{0.00293}  \\
  random2  & deep & 0.00000 & 0.00000 \\
  random100 & deep &  0.00001 & 0.00004 \\
\bottomrule  
\end{tabular}
\caption{Differences in final losses obtained for various models/datasets when trained with \SG{} as compared to  model trained with backpropagation. Bolded entries denote experiments which converged to a different solution. \emph{linear$k$} is $k$ dimensional, linearly separable dataset, \emph{noisy} is linearly separable up to 10\% label noise, and \emph{random} has completely random labeling. Shallow models means linear ones, while deep means 10 hidden layer deep linear models. Reported differences are averaged across 10 different datasets from the same distributions.}
\label{tab:diffs}
\end{table}
Table~\ref{tab:diffs} shows results for training linear regression (shallow MSE), 10 hidden layer deep linear regression (deep MSE), logistic regression (shallow log loss) and 10 hidden layer deep linear classifier (deep log loss). Since all these problems (after proper initialisation) converge to the global optima, we report the difference between final loss obtained for \SG{} enriched models and the true global optimum.

\subsection*{MNIST experiments}

Networks used are simple feed forward networks with $h$ layers of 512 hidden relu units followed by batch normalisation layers. The final layer is a regular 10-class softmax layer. Inputs were scaled to $[0,1]$ interval, besides that there was no preprocessing applied.

\subsection*{Representational Dissimilarity Matrices}

In order to build RSMs for a layer $h$ we sample 400 points (sorted according to their label) from the MNIST dataset, $\{x_i\}_{i=1}^{400}$ and record activations on each of these points, $h_i = h(x_i)$. Then we compute a matrix $\mathrm{RSM}$ such that $\mathrm{RSM}_{ij} = 1-\mathrm{corr}(h_i, h_j)$. Consequently a perfect RSM is a block diagonal matrix, thus elements of the same class have a representation with high correlation and the representations of points from two distinct classes are not correlated.
Figure~\ref{fig:dfardm} is the extended version of the analogous Figure~\ref{fig:rdms} from the main paper where we show RDMs for backpropagation, a single \SG{}, \SG{} in-between every two layers, and also the DFA model, when training 20 hidden layer deep relu network.

\subsection*{Linear classifier/regression probes}
One way of checking the degree to which the actual classification problem is solved at every layer of a feedforward network is
to attach linear classifiers to every  hidden layer and train them on the main task without backpropagating through the rest of the network. This way we can make a plot of train accuracy obtained from the representation at each layer. As seen in Figure~\ref{fig:probes} (left) there is not much of the difference between such analysis for backpropagation and a single \SG{} module, confirming our claim in the paper that despite different representations in both sections of \SG{} based module - they are both good enough to solve the main problem. We can also that DFA tries to solve the classification problem bottom-up as opposed to up-bottom -- notice that for DFA we can have 100\% accuracy after the very first hidden layer, which is not true even for backpropagation.
\begin{figure}[htb]
\includegraphics[width=0.23\textwidth]{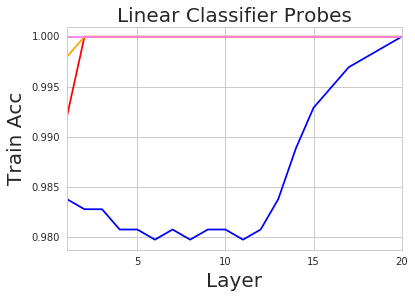}
\includegraphics[width=0.23\textwidth]{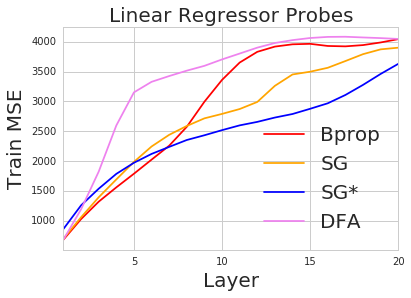}
\caption{\emph{Left:} Training accuracy at each linear classifier probe. \emph{Right:} MSE for each linear regressor probe.}
\label{fig:probes}
\end{figure}

We also introduced a new kind of linear probe, which tries to capture how much computation (non-linear transformations) are being used in each layer. To achieve this, we attach a linear regressor module after each hidden layer and regress it (with MSE) to the input of the network. This is obviously label agnostic approach, but measures how non-linear the transformations are up to the given hidden layer. Figure~\ref{fig:probes} (right) again confirms that with a single \SG{} we have two parts of the network (thus results are similar to RDM experiments) which do have slightly different behaviour, and again show clearly that DFA performs lots of non-linear transformations very early on compared to all other methods.
\begin{figure*}[h]
\begin{tikzpicture}
    \node[anchor=north west] at (0,3.5) {
        \includegraphics[height=3.1cm]{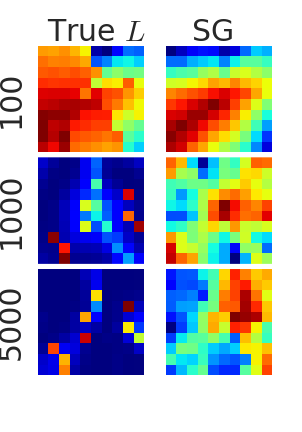}
        \includegraphics[height=3.1cm]{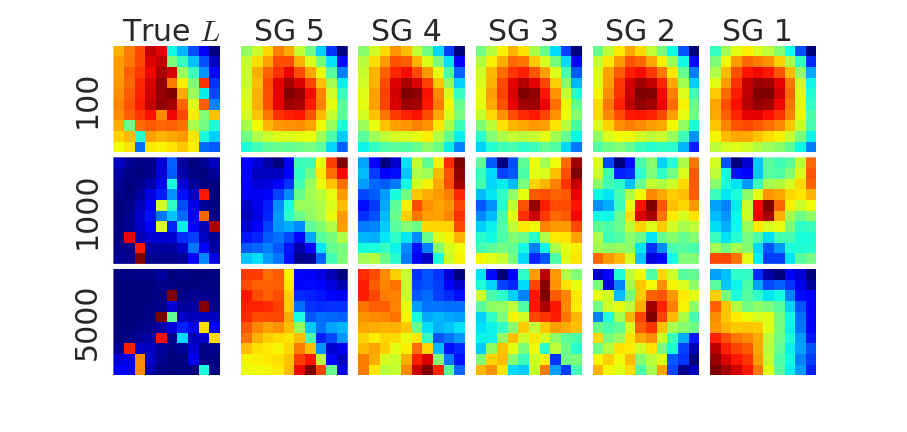}
        \includegraphics[height=3.1cm]{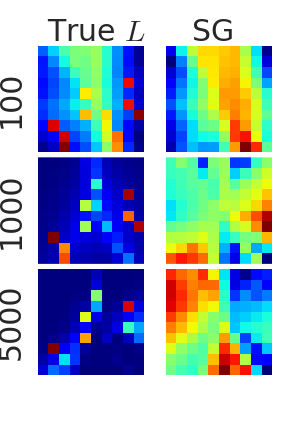}
        \includegraphics[height=3.1cm]{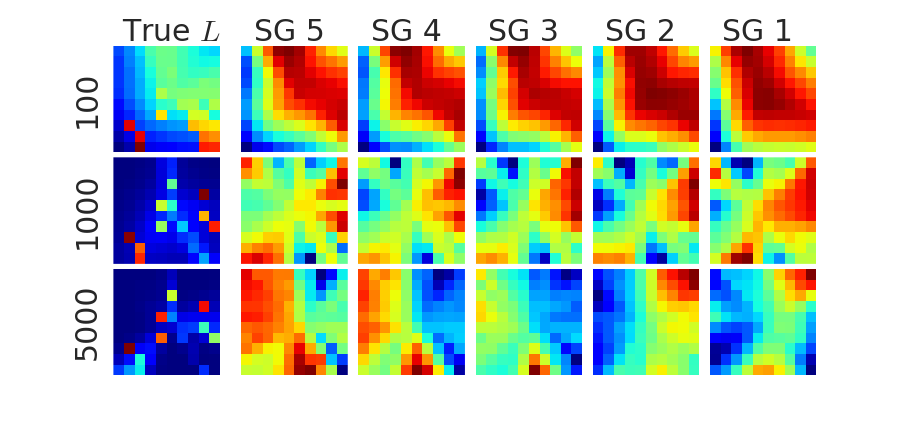}
    };
    \node[anchor=north west] at (0,0.0) {
        \includegraphics[height=3.1cm]{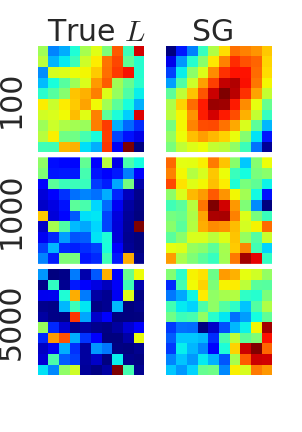}
        \includegraphics[height=3.1cm]{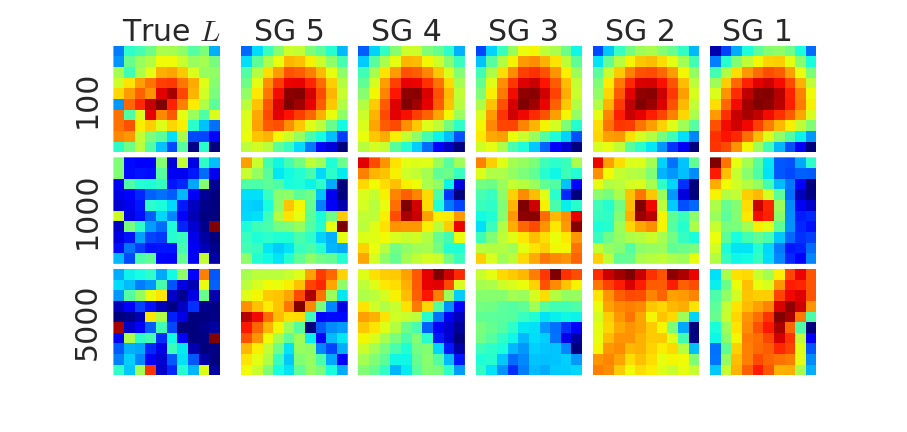}
        \includegraphics[height=3.1cm]{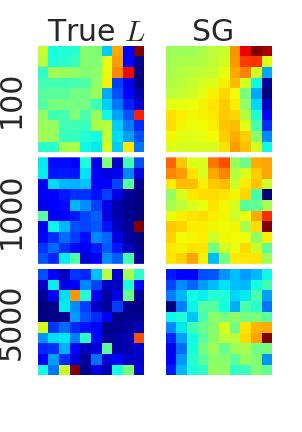}
        \includegraphics[height=3.1cm]{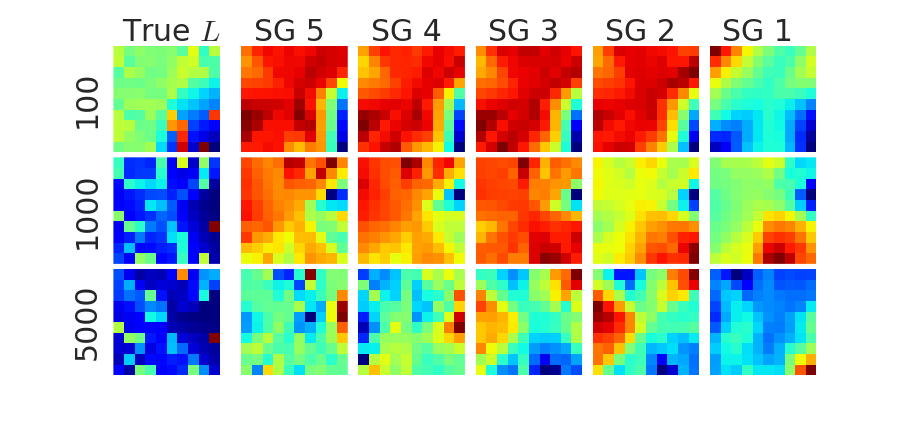}
    };
    \draw (8.85, -3.5) -- (8.85, 3.5);
    \draw (0, 0) -- (17, 0);

    \node[anchor=west, align=left, rotate=90] at (-0.05, 0.75) {Train iteration};
    
    \node[anchor=west, align=left] at (0.35,3.5) {Single SG};
    \node[anchor=west, align=left] at (4.35,3.5) {Every layer SG};

    \node[anchor=west, align=left] at (8.9 + 0.3,3.5) {Single SG};
    \node[anchor=west, align=left] at (8.9 + 4.3,3.5) {Every layer SG};
    \node[anchor=west, align=left, rotate=90] at (-0.05, -2.75) {Train iteration};
    
    \node[anchor=west, align=left] at (0.05,0.25) {a) MSE, noisy linear data, \textbf{no label conditioning}};

    \node[anchor=west, align=left] at (8.9,0.25) {b) log loss, noisy linear data, \textbf{no label conditioning}};

    \node[anchor=west, align=left] at (0.05,-3.25) {c) MSE, randomly labeled data, \textbf{no label conditioning}};

    \node[anchor=west, align=left] at (8.9,-3.25) {d) log loss, randomly labeled data, \textbf{no label conditioning}};
\end{tikzpicture}

\begin{tikzpicture}
    \node[anchor=north west] at (0,3.5) {
        \includegraphics[height=3.1cm]{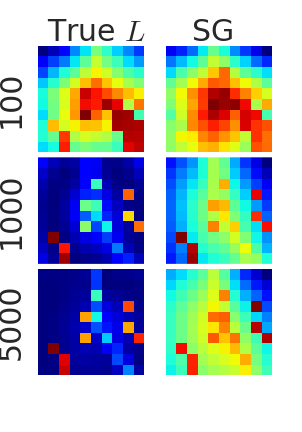}
        \includegraphics[height=3.1cm]{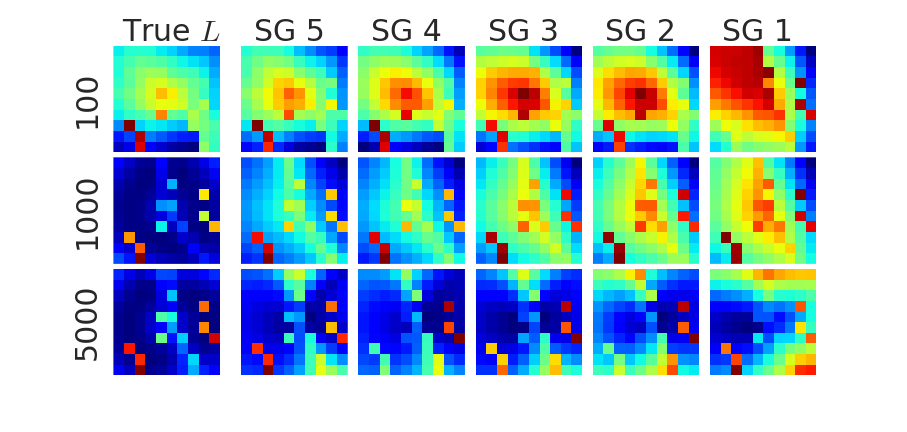}
        \includegraphics[height=3.1cm]{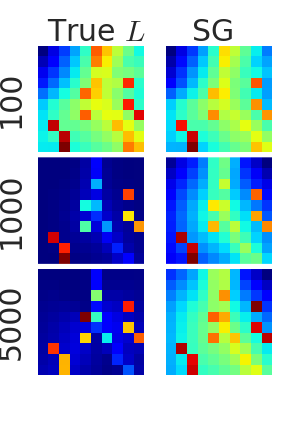}
        \includegraphics[height=3.1cm]{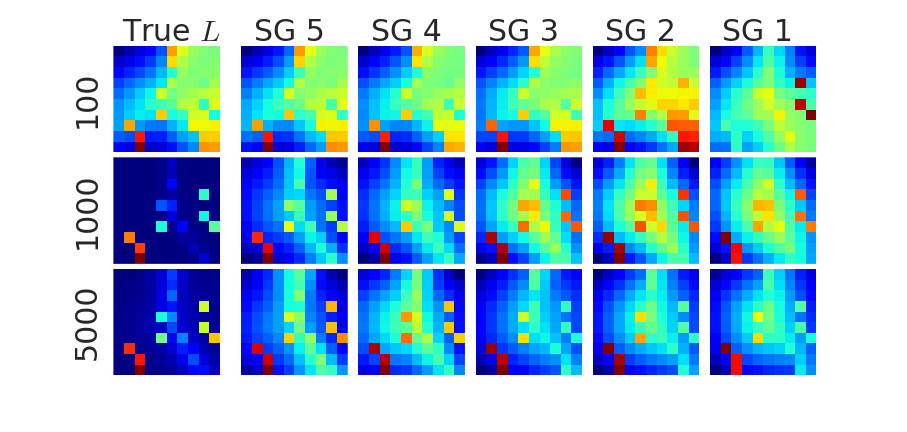}
    };
    \node[anchor=north west] at (0,0) {
        \includegraphics[height=3.1cm]{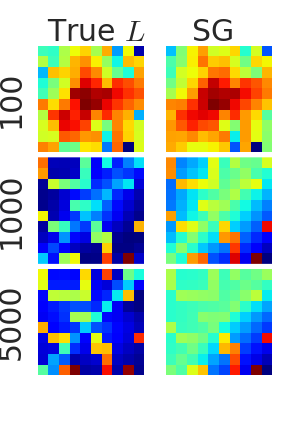}
        \includegraphics[height=3.1cm]{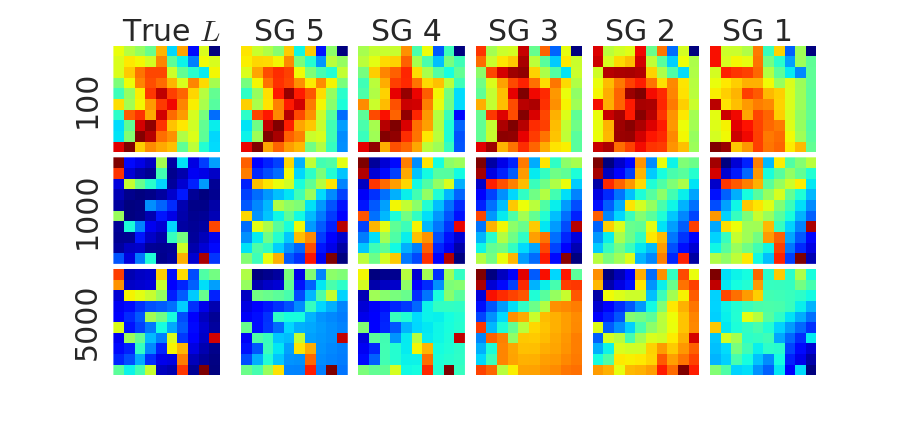}
        \includegraphics[height=3.1cm]{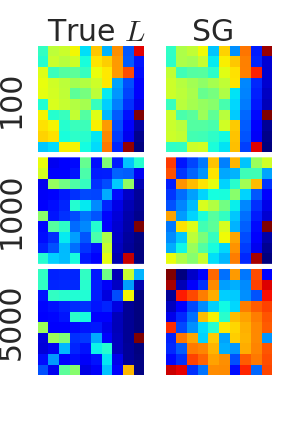}
        \includegraphics[height=3.1cm]{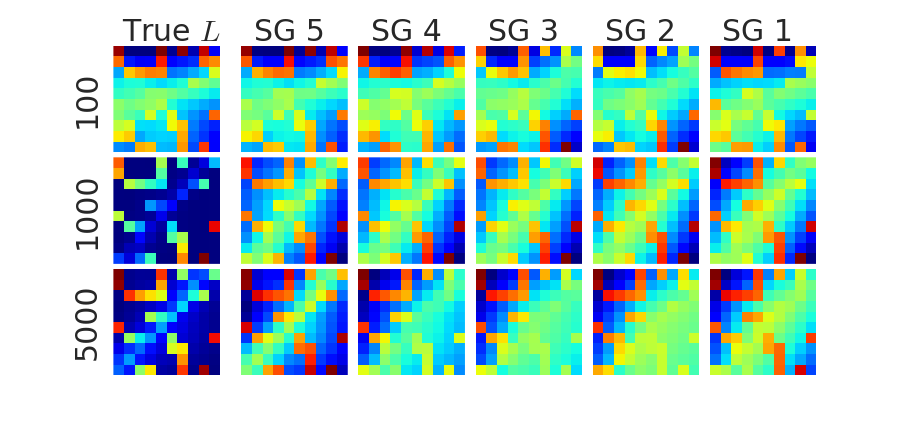}
    };
    \draw (8.85, -3.5) -- (8.85, 3.5);
    \draw (0, 0) -- (17, 0);
    \draw (0, 3.5) -- (17, 3.5);
 
    \node[anchor=west, align=left, rotate=90] at (-0.05, 0.75) {Train iteration};
    

    \node[anchor=west, align=left, rotate=90] at (-0.05, -2.75) {Train iteration};
    
    \node[anchor=west, align=left] at (0.05,0.25) {e) MSE, noisy linear data, \textbf{only label conditioning}};

    \node[anchor=west, align=left] at (8.9,0.25) {f) log loss, noisy linear data, \textbf{only label conditioning}};

    \node[anchor=west, align=left] at (0.05,-3.25) {g) MSE, randomly labeled data, \textbf{only label conditioning}};

    \node[anchor=west, align=left] at (8.9,-3.25) {h) log loss, randomly labeled data, \textbf{only label conditioning}};
\end{tikzpicture}
\caption{Visualisation of the true loss and the loss extracted from the \SG{} module. In each block left plot shows an experiment with a single \SG{} attached and the right one with a \SG{} after each hidden layer. Note, that in this experiment the final loss is actually big, thus even though the loss reassembles some part of the noise surface, the bright artifact lines are actually keeping it away from the true solution.}
\label{fig:onlyy}
\label{fig:onlyh}
\end{figure*}

\subsection*{Loss estimation}
In the main paper we show how \SG{} modules using both activations and labels are able to implicitly describe the loss surface reasonably well for most of the training, with different datasets and losses. For completeness, we also include the same experiment for \SG{} modules which do not use label information (Figure~\ref{fig:onlyh} (a) - (d)) as well as a module which does not use activations at all\footnote{This is more similar to a per-label stale gradient model.} (Figure~\ref{fig:onlyy} (e) - (h))). There are two important observations here: Firstly, none of these two approaches provide a loss estimation fidelity comparable with the full \SG{} (conditioned on both activations and labels). This gives another empirical confirmation for correct conditioning of the module. Secondly, models which used only labels did not converge to a good solutions after 100k iterations, while without the label \SG{} was able to do so (however it took much longer and was far noisier). 

\end{appendices}

\end{document}